\useunder{\uline}{\ul}{}
\newtheorem{theorem}{Theorem}[section] 
\newtheorem{definition}[theorem]{Definition}
\begin{document}

\title{QAEA-DR: A Unified Text Augmentation Framework for Dense Retrieval}


\author{
Hongming~Tan,
Shaoxiong~Zhan, 
Hai~Lin, 
Hai-Tao~Zheng,~\IEEEmembership{Senior~Member,~IEEE, }
and~Wai~Kin~(Victor)~Chan,~\IEEEmembership{Senior~Member,~IEEE}
\thanks{Corresponding author: Wai~Kin~(Victor)~Chan and Hai-Tao~Zheng.}
\thanks{Hongming~Tan, Hai~Lin, Hai-Tao~Zheng, and Wai~Kin~(Victor)~Chan are with Shenzhen International Graduate School, Tsinghua University, Shenzhen 518055, China, and also with Pengcheng Laboratory, Shenzhen, 518055, China. (e-mail: thm22@mails.tsinghua.edu.cn; ngyygm@outlook.com; zheng.haitao@sz.tsinghua.edu.cn; chanw@sz.tsinghua.edu.cn)}
\thanks{Shaoxiong~Zhan is with Shenzhen International Graduate School, Tsinghua University, Shenzhen 518055, China. (e-mail: jasaxion@gmail.com)}
}

\markboth{Journal of \LaTeX\ Class Files,~Vol.~14, No.~8, August~2021}%
{Shell \MakeLowercase{\textit{et al.}}: A Sample Article Using IEEEtran.cls for IEEE Journals}

\IEEEpubid{0000--0000/00\$00.00~\copyright~2021 IEEE}

\maketitle

\begin{abstract}
In dense retrieval, embedding long texts into dense vectors can result in information loss, leading to inaccurate query-text matching. Additionally, low-quality texts with excessive noise or sparse key information are unlikely to align well with relevant queries. Recent studies mainly focus on improving the sentence embedding model or retrieval process. In this work, we introduce a novel text augmentation framework for dense retrieval. This framework transforms raw documents into information-dense text formats, which supplement the original texts to effectively address the aforementioned issues without modifying embedding or retrieval methodologies. Two text representations are generated via large language models (LLMs) zero-shot prompting: question-answer pairs and element-driven events. We term this approach QAEA-DR: unifying question-answer generation and event extraction in a text augmentation framework for dense retrieval. To further enhance the quality of generated texts, a scoring-based evaluation and regeneration mechanism is introduced in LLM prompting. Our QAEA-DR model has a positive impact on dense retrieval, supported by both theoretical analysis and empirical experiments.
\end{abstract}

\begin{IEEEkeywords}
Dense retrieval, text augmentation, information extraction, large language model, vector database.
\end{IEEEkeywords}

\section{Introduction}
\IEEEPARstart{D}ense retrieval \cite{lee2019latent, karpukhin2020dense} is a information retrieval method that uses text embeddings to find the relevant texts for a given query. In dense retrieval, sentence embeddings transform sentences into semantic vector representations, improving passage retrieval performance over word embeddings. 

A major challenge in dense retrieval is the risk of losing essential information when converting long texts into fixed-length dense vectors, as maintaining the fidelity of sparse representations for long texts often requires very high dimensions \cite{luan2021sparse}. Additionally, this limitation is emphasized in cases where the source texts are inundated with low-quality, noisy text, resulting in inconsistent retrieval quality. On one hand, recent works propose advanced retrievers or sentence embedding models to improve dense retrieval \cite{zhan2021jointly, li2023pseudo, ni2022sentence, yu2021improving, johnson2019billion, xiao2023c}. On the other hand, input enhancement for retrieval represents a distinct optimization strategy for retrieval tasks, including query transformation and data augmentation \cite{zhao2024retrieval}. Unlike query transformation \cite{wang2019query, gao2023precise}, data augmentation improves data quality before retrieval, enhancing performance without adding user wait time. However, in text retrieval, data augmentation methods typically focus on generating new query-text pairs for the retriever training \cite{bonifacio2022inpars}, rather than directly enhancing the original texts.
As a result, current data augmentation methods have not resolved inherent deficiencies in dense retrieval, specifically the loss of key information exacerbated by the presence of low-quality text. To address this issue, it is essential to consider data augmentation specifically applied to the retrieval text itself. Intuitively, we can enhance the original text by implementing \textit{text augmentation} methods to generate high-quality alternative texts, which concentrate key information to improve semantic similarity with the query.

\IEEEpubidadjcol
Taking inspiration from existing challenges and unexplored optimization strategies, we consider transforming raw texts into more \textit{information-dense} formats \cite{yang2014detecting} that present essential factual details concisely and directly for better dense retrieval. Specifically, we propose that dense retrieval can be improved through information extraction to generate new text embeddings, a text augmentation strategy that outperforms reliance on original text alone. These generated text embedding vectors achieve high fidelity by condensing information and removing noise, and they show higher similarity with the query vector than the original text vector. To implement this idea, we need to address three issues. (i) The first issue is \textit{What information extraction tools can effectively resolve the inherent challenges of key information loss and low-quality text in dense retrieval}? To address this issue, we focus on two high-level information extraction methods: question-answer generation (QAG) and event extraction (EE). 

Inspired by the longstanding tradition of Question Answering Systems (QAS) \cite{green1961baseball}, QA pairs should be the ideal text format for dense retrieval due to their high accuracy in providing precise responses to users' similar questions. QA pairs are information-dense as they focus on specific points from raw texts, presenting significant factual details directly and succinctly. This QA format aligns well with the query typically centered on a single topic, minimizing redundant information and offering a streamlined retrieval process for targeted inquiry \cite{cohen2023qa}. Additionally, studies indicate that QA pairs and documents can complement each other as knowledge bases \cite{lee2023read}, suggesting the incorporation of QA pairs into vector databases for enhanced dense retrieval. 

Additionally, we should consider event extraction as another crucial information extraction method based on knowledge graphs. Event extraction is a particularly information-dense form that extracts structured information from unstructured text to answer the ``5W1H'' questions (who, when, where, what, why, how) \cite{xiang2019survey}. It captures both entities and relationships, aiming to extract high-level information from text and present it in a structured, information-dense format. Consequently, events correspond to potential user ``5W1H'' queries and involve reorganizing and rewriting the original text to ensure precise information delivery, thus aligning semantically with these queries.

Furthermore, we observe that QA pairs and events possess both subtle connections and clear distinctions. Event-based knowledge representations share similarities with QA pairs: (1) They both capture \textit{high-level} semantic information at the sentence and paragraph levels rather than focusing solely on keywords and entities, providing deeper insights than keyword extraction or named entity recognition. (2) Each QA pair typically corresponds to an element in event representations, as events can answer ``5W1H'' questions. For example, ``when'' aligns with event time, ``where'' with location, and ``who'' or ``what'' with subjects or objects. 
Meanwhile, QA pairs and events differ fundamentally in structure. (1) QA pairs match individual information points and align with query semantics but each represents only a small portion of the source text, potentially limiting their ability to handle complex queries. (2) Events synthesize entities and relationships and incorporate various elements to potentially offer deeper and richer semantics than QA pairs. However, the lack of focus on a single information point in events reduces their alignment with queries. Therefore, we incorporate both QAG and EE into the text augmentation framework for their complementary benefits.

(ii) The second issue is \textit{What text generation model should be used}?
Our desired text generation model aims to: (1) effectively produce multiple QA pairs and events from any given raw text, with the quantity of generated outputs corresponding to the text's information content; (2) ideally manage all generation tasks within a unified model framework. In light of these requirements, we opt for large-scale pre-trained language models (LLMs, e.g., ChatGPT\footnote{https://chat.openai.com}) as text augmentation generators. Previous works in QAG and EE not only lack multilingual capabilities but also exhibit limited open-domain generalization, which distances them further from the ideal of a unified model framework. Unlike previous models, LLMs excel in text comprehension and generalization, enabling strong semantic understanding and information extraction capabilities. Despite the distinct nature of QAG and EE, LLMs could integrate these tasks into a unified framework that employs zero-shot prompting and supports multilingual data. We design prompt instructions for QAG and EE to generate JSON-formatted QA pairs and events.

Moreover, to ensure the output quality in unsupervised, training-free LLM generator, we introduce a penalty point system that deducts points based on specified criteria after the first generation. If scores fall below a predetermined threshold, we regenerate the text based on the deducted points to ensure enhanced text quality.

(iii) The last issue to address is \textit{How can the generated structured text be utilized for dense retrieval}? As a text augmentation method, our goal is to seamlessly add the generated structured texts into the datastore for retrieval. Initially, we should convert the structured text, previously output in JSON format by a large language model, back into unstructured natural language suitable for sentence embedding. We employ a straightforward conversion strategy: for QA pairs, we concatenate the question and answer to create one single text; for events, we sequentially combine all elements of the same event into one text. By converting back to unstructured text in this straightforward manner, we also explore different text organization strategies in our experiments. As a result, both the original and newly generated text chunks are embedded and incorporated into the final vector database. Importantly, we anticipate that the generated vectors will exhibit a higher similarity to the input query vectors than the original text vectors, thereby improving retrieval performance.

In this paper, based on the above discussion, we introduce \textbf{QAEA-DR}, a framework that integrates \textbf{Q}uestion-\textbf{A}nswer Generation (QAG) and \textbf{E}vent Extraction (EE) into a Text \textbf{A}ugmentation Framework for \textbf{D}ense \textbf{R}etrieval. QAEA-DR employs two types of generated text representations through LLM prompting: QA pairs and element-driven events. To further enhance the quality and robustness of text generation, we conduct scoring and text regeneration as the verification component in QAEA-DR. After generation, both QA pairs and events are converted back into unstructured texts. Subsequently, these generated texts are organized using two distinct text organization strategies and transformed into dense vectors. At last, these generated vectors are added to the vector database as high-quality retrieval vectors. Our experiments demonstrate that incorporating both event vectors and QA pair vectors into the vector database maximizes retrieval performance. In summary, the contributions of this paper are as follows:

\begin{itemize}
\item{To the best of our knowledge, QAEA-DR is the first comprehensive and universal text augmentation framework designed for dense retrieval.}
\item{QAEA-DR innovatively integrates the information extraction methods of QAG and EE into a unified framework of text generation and organization.}
\item{QAEA-DR employs an end-to-end LLM-based training-free text generator, integrating diverse prompts of generation and scoring-based output evaluation for high-quality and controllable text outputs.}
\item{QAEA-DR is evaluated through theoretical analysis and empirical validations on various embedding models and retrieval datasets to demonstrate its effectiveness and robustness.}
\end{itemize}

\section{Related Work}
In this section, we first review dense retrieval along with sentence embedding. Next, we discuss previous input enhancement methods for retrieval. Finally, we introduce some related works on information extraction.

\subsection{Dense Retrieval}
Dense retrieval has become an important research area following the development of pre-trained Transformer language models (PLMs) \cite{vaswani2017attention, devlin2019bert, karpukhin2020dense, raffel2020exploring, yates2021pretrained}. To enhance text retrieval performance, dense retrieval leverages PLM-based text embeddings to encode queries and documents into a shared semantic vector space, focusing on matching semantic contents beyond mere keywords. This text embedding application in retrieval is fundamental to Retrieval-Augmented Generation (RAG) \cite{lewis2020retrieval}, which reduces the hallucinations in LLMs. Recent advancements in dense retrieval include architectural innovations, optimized training methodologies, and efficient indexing techniques, all of which contribute to improved retrieval accuracy and efficiency \cite{zhan2021jointly, li2023pseudo, ni2022sentence, yu2021improving, johnson2019billion, zheng2020fast}. Since the introduction of Sentence-BERT \cite{reimers2019sentence} and Dense Passage Retrieval (DPR) \cite{karpukhin2020dense}, numerous sentence embedding models have been proposed to enhance dense passage retrieval. Advanced sentence embedding models, which have been highlighted in the retrieval task of massive text embedding benchmark (MTEB) \cite{muennighoff2023mteb}, include Contriever \cite{izacard2021unsupervised}, M3E\footnote{https://huggingface.co/moka-ai/m3e-base}, BGE \cite{xiao2023c}, etc. Our text augmentation method serves as a preprocessing module for dense retrieval and is compatible with various embedding models mentioned above.

\subsection{Input Enhancement in Retrieval}
In addition to the optimization methods for the retriever, input enhancement strategy represents a distinct optimization approach for retrieval tasks \cite{zhao2024retrieval}. In particular, input data to a retriever includes user query and datastore. Therefore, input enhancement for retrieval can be categorized into two types: query transformation and data augmentation. Query transformation modifies the input query during retrieval, for example, Hypothetical Document Embeddings (HyDE) \cite{gao2023precise} that generate pseudo documents from queries, and KNN-based Query Expansion (KNN-QE) \cite{wang2019query} that enhances queries using local conceptual word embeddings. Data augmentation improves the data to be retrieved before the retrieval process, including synthesizing data, clarifying ambiguities, updating outdated data, etc. Compared to query transformation, data augmentation models have the advantage of not consuming user waiting time in retrieval, which is particularly important in practical applications. Mainstream studies focus on data augmentation for cross-modal retrieval, such as Make-An-Audio \cite{huang2023make} and ReACC \cite{lu2022reacc}. In terms of text-to-text retrieval, methods like InPars \cite{bonifacio2022inpars} generates new query-text pairs as training data. As current data augmentation methods do not consider enhancing the original text in text retrieval, we propose a text augmentation framework in this paper.

\subsection{Information Extraction}
Information extraction (IE) automatically isolates text fragments and extracts structured data from unstructured sources through NLP \cite{cowie1996information}. On one hand, IE is integral to constructing knowledge graphs (KGs), which have attracted considerable attention as a structured form of knowledge \cite{ji2021survey}. Tasks related to KG-based IE include named entity recognition, relation extraction, and event extraction \cite{wadden2019entity}. In particular, event extraction (EE) captures both entities and relationships to extract high-level structured information from raw text.
\begin{itemize}
\item{EE has evolved from rule-based approaches \cite{valenzuela2015domain} to deep learning methods like Dynamic Multi-Pooling Convolutional Neural Networks (DMCNN) \cite{chen2015event} and Joint Event Extraction via Recurrent Neural Networks (JRNN) \cite{nguyen2016joint}, and more recently to ChatGPT for Event Extraction (ChatEE) \cite{wei2023zero}, and the Generative Template-based Event Extraction (GREE) \cite{huang2023event}, reflecting significant progress in the field.}
\item{There are methods that achieve multi-event extraction, such as Jointly Multiple Event Extraction (JMEE) \cite{liu2018jointly} and Proxy Nodes Clustering Network (ProCNet) \cite{wang2023document}. Nevertheless, current multi-event extraction methods are closed-domain and are limited by their reliance on predefined event schemas.}
\end{itemize}
On the other hand, Question-Answer Generation (QAG), an extension of Question Generation (QG) \cite{lewis2019unsupervised, zhang2019addressing}, generates several QA pairs given a text. Notably, QAG can also be classified as IE since QA pairs are structured texts. 
\begin{itemize}
\item{QAG have progressed from rule-based models \cite{heilman2010good} to generative-based PLMs like Information-Maximizing Hierarchical Conditional VAEs (Info-HCVAE) \cite{lee2020generating} and Language Models for Question and Answer Generation (LMQG) \cite{ushio2023empirical}.}
\end{itemize}

However, current multi-event extraction and QAG face issues such as a lack of multilingual support, uncontrollable generation quantity and quality, and incompatibility within a unified model framework. Although these models show relatively good results on some datasets, they are far from our goal of a generalizable, quality-controllable, and unsupervised unified framework in open-domain applications. QAEA-DR combines EE and QAG, leveraging LLM to build an end-to-end framework that involves prompt-based generation, evaluation, and regeneration.

\begin{figure*}[t]
  \centering
  \includegraphics[width=0.9\textwidth]{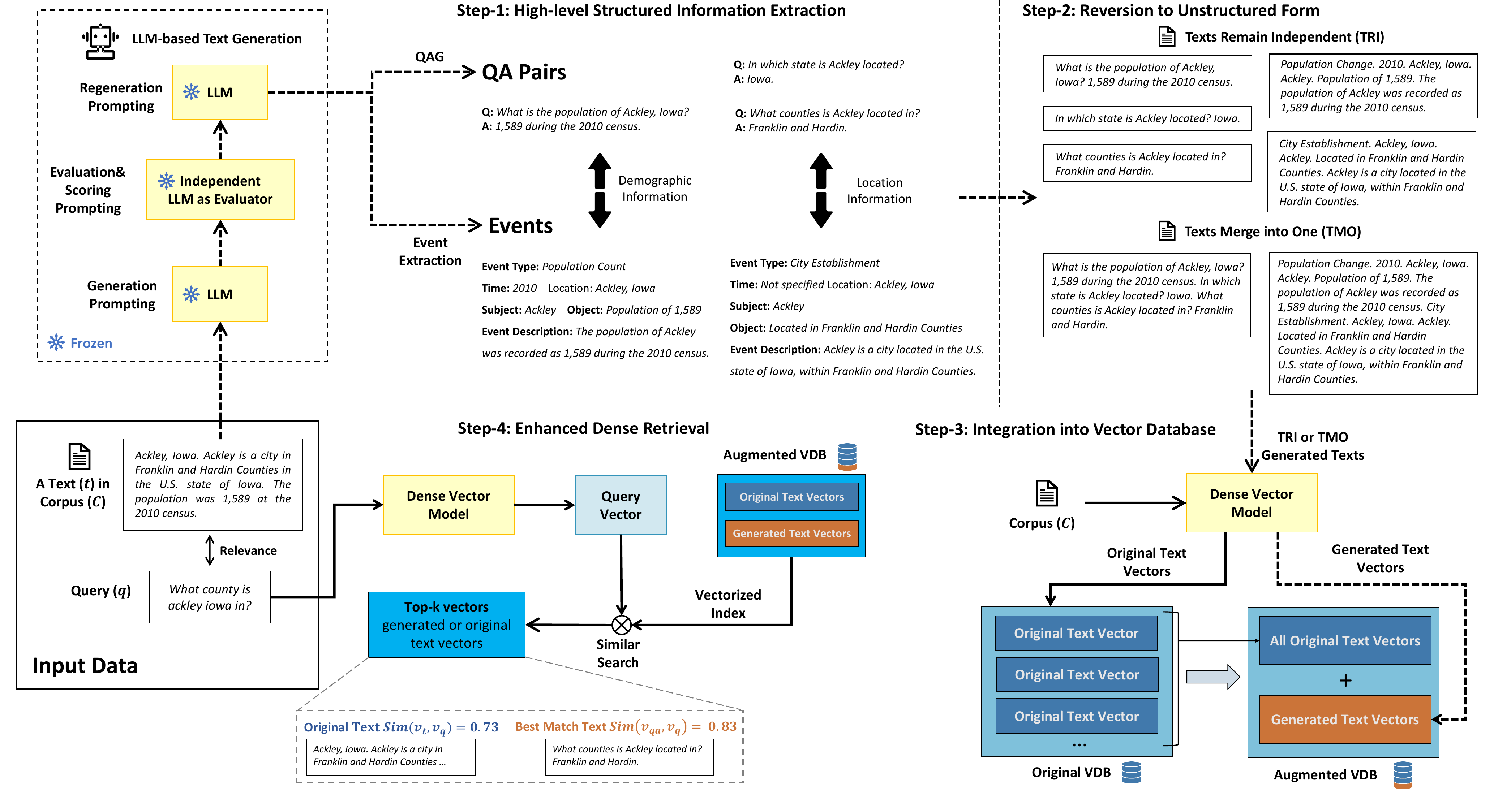}
  \caption{QAEA-DR example. The dashed arrows represent the text augmentation path of QAEA. In Step-1: High-level Structured Information Extraction through LLM-based text generators with frozen parameters, the generated QA pairs and events preserve similar key information in different formats. This is followed by Step-2: Reversion to Unstructured Form and Step-3: Integration into the Vector Database. In Step-4: Enhanced Dense Retrieval, the results demonstrate that there exists a generated vector with higher query relevance compared to the original text vector. This is because the key information density of the generated text vector is enhanced through information extraction, making it semantically closer to the query.}
  \label{QAEA-DR Framework}
\end{figure*}

\section{Approach}

\subsection{Notations and Problem Definition}
In this paper, we focus on text augmentation approach for dense passage retrieval. A retrieval dataset typically comprises three types of data: the corpus, queries, and labeled query-text relationships. Initially, let \(\mathcal{C} = \{t_1, t_2, \ldots, t_n\}\) represent a corpus, where each \(t_i\) is a text chunk (simplified as text in the following discussion) and \(n\) is the total number of texts in corpus. The initial step in dense passage retrieval is to construct a mapping function \(\Phi: \mathcal{C} \rightarrow \mathbb{R}^d\), where \(d\) is the vector dimension, such that semantically similar texts are close in the vector space. Specifically, the function \(\Phi\) uses sentence embedding model to transform all texts into dense vectors (i.e. embedding) stored in a vector database. We denote the resulting vector database as \(\text{VDB}_\text{ori}\), where \(\text{VDB}_\text{ori} = \{v_1, v_2, \ldots, v_n\}\) with each vector \(v_i \in \text{VDB}_\text{ori}\) corresponding to a text \(t_i\) in \(\mathcal{C}\). Given a query text \(q\), which is also mapped to a vector \(v_q \in \mathbb{R}^d\) by \(\Phi\), the retriever calculates the top-k vectors \(v_i \in \text{VDB}_\text{ori}\) with the highest similarity to query vector \(v_q\), resulting in a subset \(S \subseteq \text{VDB}_\text{ori}\), where \(|S| = k\). The vector similarity, denoted as \(\text{sim}(v_q, v_i)\), measures the distance between \(v_q\) and each vector \(v_i\) in \(\text{VDB}_\text{ori}\) (e.g., calculating the cosine similarity based on the inner product \(\langle v_q, v_i \rangle\)). Evaluation metrics (e.g., NDCG) are used to calculate retrieval scores based on labeled query-text relationships. Here, we define our QAEA-DR as follows:

\begin{definition}[QAEA-DR]
QAEA-DR is a text augmentation framework that augments the original corpus \(\mathcal{C}\) by generating QA pairs and element-driven events using LLM-based generators. This process enriches the vector database \(\text{VDB}_\text{ori}\) by adding new vector representations derived from the augmented texts. The similarity of the query to generated text vectors should exceed that of original text vectors, potentially enhancing retrieval quality.
\end{definition}

\subsection{Overview of QAEA-DR}
\label{sec:overview}
Fig. \ref{QAEA-DR Framework} shows the complete workflow of QAEA-DR, illustrating an example of the framework in action. Specifically, QAEA-DR operates as follows:
\begin{itemize}
\item \textit{Step-1: Structured Information Extraction.} Each text \(t_i\) from the corpus \(\mathcal{C}\), where \(i=1, \ldots, n\), is augmented using LLM prompting to generate JSON format QA pairs \(\text{QA}_{\text{json}}\) and events \(\text{EVENT}_{\text{json}}\). We discuss the design of LLM prompts for structured text augmentation in Section \ref{Prompt-based Generation}. As illustrated in Fig. \ref{QAEA-DR Framework}, both types of structured texts effectively extract key information. Specifically, each QA pair presents an individual information point, while each event summarizes multiple points.
\item \textit{Step-2: Reversion to Unstructured Form.} The generated structured texts are converted back into unstructured natural language texts, resulting in a set of QA texts \(\{qa_i^{(1)}, qa_i^{(2)}, \ldots, qa_i^{(l)}\}\) and a set of event texts \(\{event_i^{(1)}, event_i^{(2)}, \ldots, event_i^{(m)}\}\), where \(l\) and \(m\) represent the total number of QA pair texts and event texts generated from \(t_i\), respectively. Subsequently, we mainly consider two text organization strategies:
  \begin{itemize}
      \item \textit{Texts Remain Independent (TRI):} Maintain the generated set as \( \text{QA}_i = \{qa_i^{(1)}, qa_i^{(2)}, \ldots, qa_i^{(l)}\} \) and \( \text{EVENT}_i = \{event_i^{(1)}, event_i^{(2)}, \ldots, event_i^{(m)}\} \), for \(i=1, \ldots, n\).
      \item \textit{Texts Merge into One (TMO):} In this mode, individual texts generated from the same original text \(t_i\) are concatenated, forming singleton set \( \text{QA}_i = \{qa_i: qa_i^{(1)} + qa_i^{(2)} + \ldots + qa_i^{(l)}\} \) and \( \text{EVENT}_i = \{event_i: event_i^{(1)} + event_i^{(2)} + \ldots + event_i^{(m)}\} \), for \(i=1, \ldots, n\). The ``+'' denotes text concatenation.
  \end{itemize}
Overall, TRI decomposes texts, retaining all segments extracted from the original, but may include noisy texts unrelated to all queries. Conversely, TMO consolidates generated texts to reduce the density of noise.
\item \textit{Step-3: Integration into Vector Database.} 
The transformed texts are mapped to vectors by the function \(\Phi\). 
For QA texts, \( V_{\text{QA}_i} = \Phi(\text{QA}_i) = \{v_{qa_i}^{(j)} \mid j = 1, \ldots, l\} \) (TRI) or \( \{v_{qa_i}\} \) (TMO), where \(i=1, \ldots, n\), results in the vector database \(\text{VDB}_{\text{QA}} = \bigcup_{i=1}^{n} V_{\text{QA}_i}\). Similarly, \(V_{\text{EVENT}_i} = \Phi(\text{EVENT}_i) = \{v_{event_i}^{(j)} \mid k = 1, \ldots, m\} \) (TRI) or \( \{v_{event_i}\} \) (TMO), populates \(\text{VDB}_{\text{EVENT}} = \bigcup_{i=1}^{n} V_{\text{EVENT}_i}\). These generated vectors are then integrated into the final vector database \(\text{VDB}_\text{final} = \text{VDB}_\text{ori} + \text{VDB}_\text{QA} + \text{VDB}_\text{EVENT}\) to augment the original vector space.
\item \textit{Step-4: Enhanced Dense Retrieval.} The query vector \(v_q\) searches for the top-k similarity vectors in \(\text{VDB}_\text{final}\). For any given query \(q\) associated with a positively related text \(t_i\), there exists a vector in either \(\text{VDB}_\text{QA}\) or \(\text{VDB}_\text{EVENT}\) that exhibits higher similarity with the query vector \(v_q\) than the original text vector \(v_i\).
\end{itemize}

In conclusion, our main contribution is the implementation of QAEA-DR, which, in Step-1, Step-2, and Step-3, generates two new types of text vectors—QA pair vectors and event vectors—and integrates them into the vector database. These generated vectors enhance the retrieval performance in the final Step-4 of dense retrieval. Fig. \ref{QAEA-DR Framework} demonstrates that in Step-4, the best match with the query is derived from the generated vectors with high similarity. 

In the following sections, we will first discuss \textit{text augmentation details in Step 1 to 3}. Then, we substantiate the effectiveness of QAEA-DR theoretically in the subsequent section, addressing \textit{why generated vectors in Step-4 could exhibit higher similarity with the query vector than the original text vector}.

\subsection{LLM-based Text Augmentation in QAEA}
\label{Prompt-based Generation}
In this section, we describe our implementation of LLM-based text augmentation and the unifying properties of QAEA. \textit{QAEA} is defined as a text augmentation module excluding the retrieval component. It combines original texts, QA pairs, and events into a new vector database to enhance natural texts through information extraction. QAEA corresponds to Steps 1 to 3 in Fig. \ref{QAEA-DR Framework}.

Following standard LLM-based prompt engineering practices \cite{giray2023prompt}, our defined single-step zero-shot prompt consists of three components: instruction, input data, and output indicator. For both QAG and EE prompting tasks, we make targeted adjustments to the prompt instructions. Fig. \ref{prompt_template} illustrates the three-step prompts defined for both QAG and EE, including generation, scoring-based quality evaluation, and regeneration. We achieve different functionalities by modifying the instructions for each type of prompt.

\textit{Question-Answer Generation.} In QAG, our goal is to generate as many informative structured QA pairs as possible through instruction. Due to the lack of a universally recognized question generation directive, we employ question categorization to guide the LLM in producing diverse QA pairs. In designing question types, we observe that rhetorical patterns in writing (e.g., cause and effect) serve as methods for organizing information and can be generalized for question categorization. Consequently, the content directive specifies five question types for varied outputs: \textit{factual inquiry}, \textit{explanation and definition}, \textit{cause and effect}, \textit{comparison and contrast}, and \textit{evaluation and opinion}. Additionally, the prompt instructs LLM to highlight frequently occurring entities and relationships in the original text, which should be reflected in the generated QA pairs. Regarding the output format, the instructions guide the LLM to produce QA pairs in JSON format \(\text{QA}_{\text{json}}\): \{\textit{``question type''}: [[\textit{``question''}, \textit{``answer''}]]\}, where ``question type'' includes five categories, each capable of containing multiple QA pairs depending on LLM generation. As illustrated in Step-2 of Fig. \ref{QAEA-DR Framework}, the output of QAG is processed by simply concatenating the ``question'' and ``answer'' strings into natural language texts.

\textit{Event Extraction.}
Since our text augmentation method for dense retrieval is initially designed for application on a small-scale Chinese news passage retrieval dataset (sCNPR) we created from a scientific project, we naturally consider event extraction. Unlike previous zero-shot prompt-based ChatEE \cite{wei2023zero}, which requires predefined event types and supports single-event extraction, our approach allows the LLM to detect and generate multiple event types from the original text. In the EE prompt instructions, we first direct the LLM to identify multiple event types and use these generated types as triggers to populate event elements. Drawing from the event element categorization in the ACE 2005 dataset and common real-world event attributes, we define that each event includes the elements ``event type,'' ``time,'' ``location,'' ``event subject,'' ``event object,'' ``event,'' and ``impact.'' In terms of the output format, we guide the LLM to generate event outputs in JSON format \(\text{EVENT}_{\text{json}}\): [\textit{\{``event type'', ``time'', ``location'', ``event subject'', ``event object'', ``event'', ``impact''\}}], where outputs default to null if corresponding event elements are absent in the original text. Fig. \ref{QAEA-DR Framework} shows that the output of EE is transformed back into unstructured text by concatenating all event elements within an event, separated by periods.

\begin{figure} 
  \centering
  \includegraphics[width=0.8\linewidth]{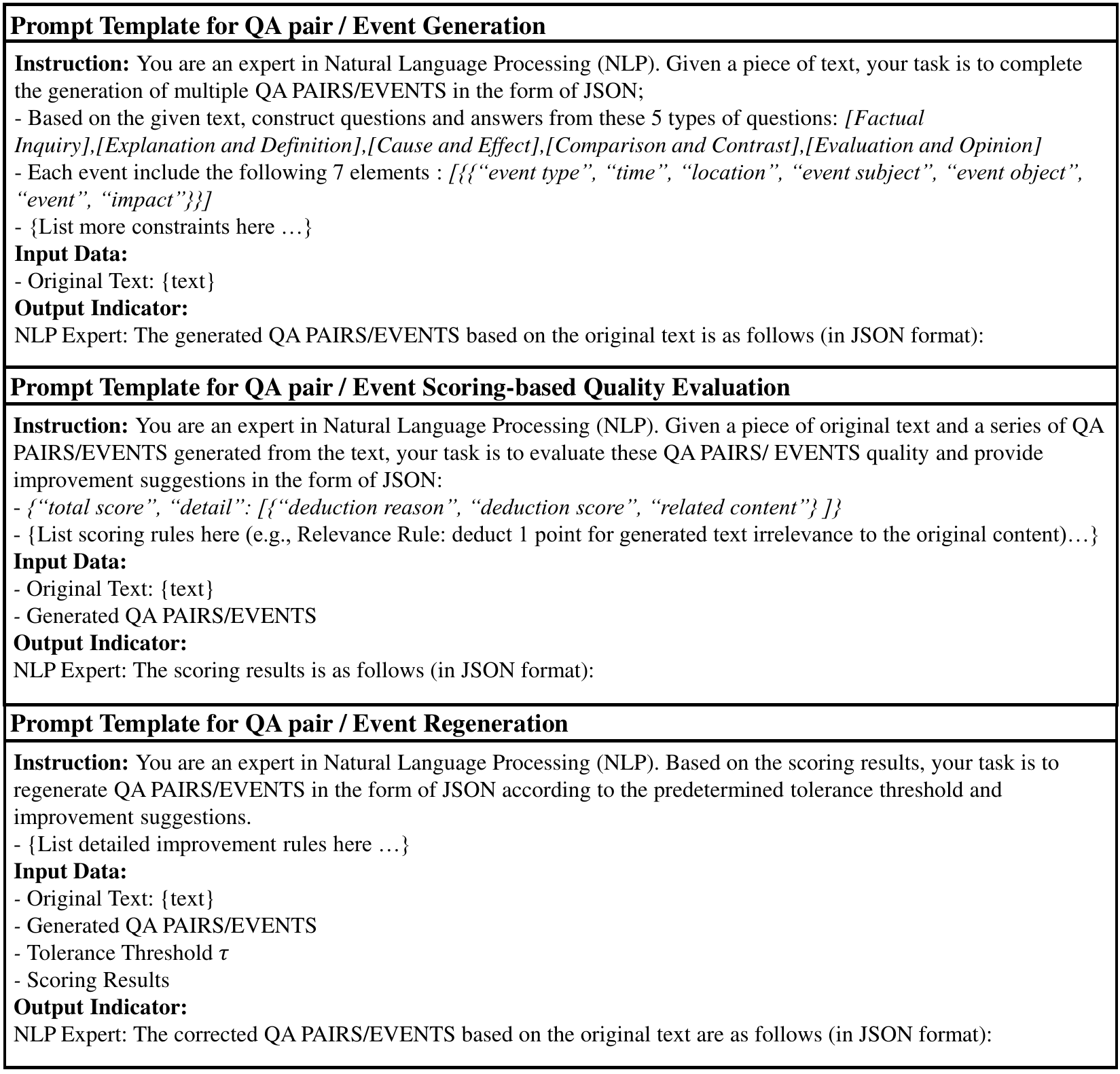}
  \caption{Prompt templates for QA pair/event generation, scoring-based quality evaluation and regeneration, respectively.}
  \label{prompt_template}
\end{figure}

\textit{Text Evaluation and Regeneration.}
In the open domain, evaluating the quality of QAG and EE is difficult due to the lack of labels. To address this, we introduce a robust prompt-based mechanism for evaluating the quality of generated texts and regenerating them if necessary, as outlined in Fig. \ref{prompt_template} and Algorithm \ref{alg1}. Using separate roles for generation and evaluation by different LLMs, \(\mathbf{LLM}_{\text{generator}}\) and \(\mathbf{LLM}_{\text{evaluator}}\), the double-check system scores and potentially rewrites outputs based on predefined criteria. As shown in the scoring-based quality evaluation prompt template in Fig. \ref{prompt_template}, each generated text starts with a perfect score of 10, and points are deducted for failures in \textit{Relevance}, \textit{Clarity}, \textit{Consistency}, and \textit{Completeness}. For example, the Relevance rule checks if each generated text faithfully reflects the original content, deducting one point for irrelevance. The scoring details are outlined in the JSON format \(\text{Score}_{\text{json}}\): \(\{\textit{``total score''}, \textit{``detail''}: [\{\textit{``deduction reason''}, \textit{``deduction score''}, \textit{``related content''}\}]\}\). Outputs that score at or below a set threshold \(\tau\) enter a regeneration prompting, where texts are adjusted or rewritten to ensure that outputs meet rigorous standards. This approach not only provides a reliable method for assessing generated texts but also controls the quality of the LLM outputs through a structured regeneration process based on scoring outcomes.

\textit{Unified Framework.} Algorithm \ref{alg1} details the QAEA module and shows that QAEA manages QA pairs and events in a similar manner. We conclude that QAEA is identified as a ``unified'' framework from two perspectives:
\begin{itemize}
\item{\textit{Text generation.} QAEA employs standardized prompt templates for generation, scoring-based quality evaluation, and regeneration, ensuring a uniform framework for both QAG and EE.}
\item{\textit{Text organization.} Additionally, all structured texts from both QA pairs and events are eventually converted into unstructured text, then embedded and incorporated into the vector database in a unified manner for dense retrieval.}
\end{itemize}

\begin{algorithm}[t]
\caption{QAEA}
\label{alg1} 
\begin{algorithmic}[1]
\REQUIRE A corpus $\mathcal{C} = \{t_1, t_2, \ldots, t_n\}$, large language model $\mathbf{LLMs}$, embedding model $\Phi$, score threshold $\tau$
\ENSURE Augmented vector database $\mathbf{VDB}_{\text{final}}$
\STATE Initialization: $\mathbf{VDB}_{\text{final}}, \mathbf{VDB}_{\text{QA}}, \mathbf{VDB}_{\text{EVENT}} \gets \emptyset$
\STATE $\mathbf{VDB}_{\text{ori}} \gets \Phi(\mathcal{C})$
\FOR{each $t_i \in \mathcal{C}$}
    \FOR{each type in $\{\text{``QA''}, \text{``EVENT''}\}$}
        \STATE $\text{type}_{\text{json}} \gets \mathbf{LLM}_{\text{generator}}(t_i, \text{type})$
        \STATE $\text{Score}_{\text{json}} \gets \mathbf{LLM}_{\text{evaluator}}(\text{type}_{\text{json}})$
        \IF{Score in $\text{Score}_{\text{json}} \leq \tau$}
            \STATE $\text{type}_{\text{json}} \gets \mathbf{LLM}_{\text{generator}}(t_i, \text{type}_{\text{json}}, \text{Score}_{\text{json}})$
        \ENDIF
        \STATE Unstructured form: $\{\text{element}_i^{(j)}\} \gets \text{type}_{\text{json}}$
        \IF{Texts Remain Independent}
            \STATE $\text{type}_i \gets \{\text{element}_i^{(j)}\}$
        \ELSE
            \STATE $\text{type}_i \gets \text{Concatenate}[\text{element}_i^{(j)}]$
        \ENDIF
        \STATE $V_{\text{type}_i} \gets \Phi(\text{type}_i)$
        \STATE $\mathbf{VDB}_{\text{type}} \gets \mathbf{VDB}_{\text{type}} \cup V_{\text{type}_i}$
    \ENDFOR
\ENDFOR
\STATE $\mathbf{VDB}_{\text{final}} \gets \mathbf{VDB}_{\text{ori}} + \mathbf{VDB}_{\text{QA}} + \mathbf{VDB}_{\text{EVENT}}$
\RETURN $\mathbf{VDB}_{\text{final}}$
\end{algorithmic}
\end{algorithm}

\subsection{Theory in QAEA-DR}
\label{sec:theory}
Now, we theoretically explain the effectiveness of QAEA-DR. In the following theoretical analysis, we consider only the case of Texts Remain Independent (TRI) mentioned in Section \ref{sec:overview}. It is evident that Texts Merge into One (TMO) can be viewed as a special case of TRI, and we will further discuss their differences in the experimental analysis. Given a text \( t_i \in \mathcal{C} \), we generate a text set \( \{t_i^{(j)}\} \) including QA pair texts and event texts, where \( j \) records the total number of final generated texts. In terms of vector representation, similarly, we combine the QA pair vectors \( V_{\text{QA}_i} \) and event vectors \( V_{\text{EVENT}_i} \) into \( V_{\text{GEN}_i} = \{ v^{(j)}_i \} \).

We first invoke the concept of \textit{fidelity} of the retrieval process and \textit{normalized margin} from previous work \cite{luan2021sparse}. Subsequently, without loss of generality, we demonstrate that these generated vectors either maintain or enhance the fidelity of the retrieval process. Theorem \ref{theorem: Text Augmentation} introduces the effectiveness of text augmentation for dense retrieval. Theorem \ref{theorem: QAEA} demonstrates the effectiveness of both QA Pair texts and event texts.

\textit{Fidelity} refers to the ability of dense vector models to maintain the distinctions made by traditional sparse bag-of-words retrieval models. Unlike sparse representations for exact matching, dense vector models map texts of arbitrary length into a fixed-length vector space, which may result in a loss of fidelity and consequently information loss. Importantly, our QAEA generates information-dense new texts that removes noisy texts and refines key information to improve fidelity. To measure fidelity, we introduce \textit{normalized margin} to indicate the distinction between the truly relevant text and other texts.

\begin{definition}[Normalized Margin]
Let \( v_q \), \( v_1 \), and \( v_2 \) be sentence embeddings in \( \mathbb{R}^d \). The normalized margin is defined as:
\begin{equation}
\mu(v_q, v_1, v_2) = \frac{\langle v_q, v_1 - v_2 \rangle}{\|v_q\| \cdot \|v_1 - v_2\|}
\end{equation}
\end{definition}

The normalized margin in retrieval models serves as a quantitative measure to evaluate fidelity and provides a comparative perspective on vector similarity. It indicates how distinctly a target text is separated from irrelevant ones in vector space, enhancing retrieval accuracy and relevance. Assuming \( v_1 \) is the target text vector, we expect a larger normalized margin between \( v_1 \) and \( v_2 \) (\(\mu(v_q, v_1, v_2) > 0\)), which indicates a greater difference in relevance between the target text and other texts for a given query.

Theorem \ref{theorem: Text Augmentation} uses normalized margin to demonstrate the effectiveness of text augmentation. 
The theorem holds under certain constraints, including \textit{relevance enhancement}, \textit{irrelevance consistency}, and \textit{orthogonality}. Under ideal text augmentation, the generated vectors of the target text should exhibit improved query relevance, while those of non-target texts should not be more relevant to the query than the original vectors. Additionally, in sparse retrieval, orthogonal vectors can be achieved by dividing the vocabulary into non-overlapping segments. Similarly, in dense models, we assume that vector representations of different texts are orthogonal when the content is irrelevant.

\begin{theorem}[Text Augmentation]
\label{theorem: Text Augmentation}
Given a text \( t_i \), let \( \{v_i^{(j)}\} \) represent a set of generated text vectors, where \( j \) records the total number of generated texts, and let \( v_i \) represent the original text vector.
Consider a text \( t_1 \) most relevant to a query \( q \) and a competing text \( t_2 \), we have generated text vector sets \( \{v_1^{(j)}\} \) and \( \{v_2^{(j)}\} \), respectively.  There exists at least one generated vector \( v_1^{(0)} \in \{v_1^{(j)}\} \) such that the normalized margins
\begin{equation}
\mu(v_q, v_1^{(0)}, v_2) \geq \mu(v_q, v_1, v_2) \tag{2}
\label{eq:TRI1}
\end{equation}
and
\begin{equation}
\mu(v_q, v_1^{(0)}, v_2^{(j)}) \geq \mu(v_q, v_1, v_2), \forall j \tag{3}
\label{eq:TRI2}
\end{equation}
are both satisfied under the following conditions:

(i) Relevance Enhancement: \(\exists v_1^{(0)} \in \{v_1^{(j)}\} \), s.t. \( \langle v_q, v_1^{(j)} \rangle \geq \langle v_q, v_1 \rangle \); 

(ii) Irrelevance Consistency: \(\forall j\), \( \langle v_q, v_2^{(j)} \rangle \leq \langle v_q, v_2 \rangle \); 

(iii) Orthogonality: Any two generated text vectors across \( \{v_1^{(j)}\} \) and \( \{v_2^{(j)}\} \) are orthogonal to each other. Additionally, different vector segments derived from a text are orthogonal to each other.

\end{theorem}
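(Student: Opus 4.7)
The plan is to reduce the proof to separate estimates for the numerator and the denominator of the normalized margin. Since $\mu(v_q, v_1, v_2) = \langle v_q, v_1 - v_2 \rangle / (\|v_q\| \cdot \|v_1 - v_2\|)$ and $\|v_q\|$ appears on both sides of each inequality, I can cancel it immediately. By bilinearity, the numerator splits as $\langle v_q, v_1 \rangle - \langle v_q, v_2 \rangle$, so numerator control is reduced to controlling individual inner products with $v_q$, which is exactly the form in which conditions (i) and (ii) are stated.

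For the numerator step, I would apply (i) and (ii) in sequence. To establish (\ref{eq:TRI1}), it suffices that $\langle v_q, v_1^{(0)} \rangle - \langle v_q, v_2 \rangle \geq \langle v_q, v_1 \rangle - \langle v_q, v_2 \rangle$, which is just \textbf{Relevance Enhancement}. For (\ref{eq:TRI2}), I would add the bound from (i) to the bound $-\langle v_q, v_2^{(j)} \rangle \geq -\langle v_q, v_2 \rangle$ supplied by \textbf{Irrelevance Consistency}, obtaining $\langle v_q, v_1^{(0)} \rangle - \langle v_q, v_2^{(j)} \rangle \geq \langle v_q, v_1 \rangle - \langle v_q, v_2 \rangle$ uniformly in $j$.

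For the denominator, I would invoke \textbf{Orthogonality} (iii), read so that any two vectors belonging to different source texts (original or generated) are orthogonal. The identity $\|u - w\|^2 = \|u\|^2 + \|w\|^2$ then applies to each of $\|v_1 - v_2\|$, $\|v_1^{(0)} - v_2\|$, and $\|v_1^{(0)} - v_2^{(j)}\|$. Under the standard convention that the embedding map $\Phi$ produces unit-norm vectors (consistent with cosine-similarity retrieval as described in Section~\ref{Prompt-based Generation}), each of the three denominators equals $\sqrt{2}$ and therefore cancels on both sides. Pairing the denominator equality with the numerator dominance from the previous step yields both (\ref{eq:TRI1}) and (\ref{eq:TRI2}).

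The main obstacle is nailing down condition (iii) precisely enough to cover every pair that enters the bound: we need orthogonality not only between $v_1^{(j)}$ and $v_2^{(j')}$, but also between the originals $v_1$ and $v_2$, and between each generated vector and the original vector of a different source text. A secondary delicate point is that condition (i) gives no a priori control on $\|v_1^{(0)}\|$ versus $\|v_1\|$, so without the unit-norm convention on $\Phi$ the denominator comparison could fail. I would therefore state the normalization of $\Phi$ explicitly at the start of the proof, after which the remainder is a short chain of inequalities combining bilinearity, the Pythagorean identity, and conditions (i)--(ii).
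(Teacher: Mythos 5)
Your numerator step is exactly the paper's: cancel $\|v_q\|$, split $\langle v_q, v_1-v_2\rangle$ by bilinearity, and apply conditions (i) and (ii) to get $\langle v_q, v_1^{(0)}-v_2\rangle \geq \langle v_q, v_1-v_2\rangle$ and the uniform-in-$j$ analogue. The gap is in your denominator step. The theorem does not grant unit-norm embeddings, and condition (iii) as stated does not assert orthogonality between the originals $v_1$ and $v_2$, nor between a generated vector and the original vector of the other text; you correctly identify these as missing, but then resolve the problem by simply adding them as hypotheses ("state the normalization of $\Phi$ explicitly"), which proves a different and essentially trivial statement (every denominator equal to $\sqrt{2}$) rather than the theorem as given. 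Moreover, the unit-norm convention is inconsistent with what the second clause of (iii) is meant to encode: in the paper's proof the original vector is decomposed into a generated segment plus an orthogonal noise segment, $v_1 = v_1^{(0)} + v_{noise_1}$ and $v_2 = v_2^{(j)} + v_{noise_2}$, so $\|v_1^{(0)}\| \leq \|v_1\|$ with equality only when the noise vanishes; $v_1$ and $v_1^{(0)}$ cannot both be unit vectors in that model.

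The paper's denominator argument needs no normalization and no orthogonality between originals: expanding $\|v_1 - v_2\|^2 = \|v_1^{(0)} - v_2^{(j)}\|^2 + \|v_{noise_1} - v_{noise_2}\|^2 + 2\langle v_1^{(0)} - v_2^{(j)}, v_{noise_1} - v_{noise_2}\rangle$ and killing the cross term by orthogonality yields $\|v_1 - v_2\| \geq \|v_1^{(0)} - v_2^{(j)}\|$, and similarly $\|v_1 - v_2\| \geq \|v_1^{(0)} - v_2\|$; a numerator that does not decrease combined with a denominator that does not increase then gives (\ref{eq:TRI1}) and (\ref{eq:TRI2}) (using, implicitly, that the margin of the relevant text is non-negative — a sign caveat your equal-denominator trick would have avoided, but only under hypotheses the theorem does not supply). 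To repair your write-up, drop the unit-norm assumption, derive the noise decomposition of the originals from the "vector segments" clause of (iii), and replace the "both denominators equal $\sqrt{2}$" step with this denominator-monotonicity step.
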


\begin{proof}
First, we construct hypothetical orthogonal noise vectors \( v_{noise_1} \) for \( v_1 \) and \( v_{noise_2} \) for \( v_2 \). Representing \( v_1 \) and \( v_2 \) as:
\begin{align}
v_1 &= v_1^{(0)} + v_{noise_1}, \tag{4} \\
v_2 &= v_2^{(j)} + v_{noise_2}, \forall j \tag{5}
\end{align}
Then, the squared norm expansion for the difference between \( v_1 \) and \( v_2 \) gives us:
\begin{align}
\|v_1 - v_2\|^2 &= \|v_1^{(0)} - v_2^{(j)} + v_{noise_1} - v_{noise_2}\|^2 \nonumber \\
&= \|v_1^{(0)} - v_2^{(j)}\|^2 + \|v_{noise_1} - v_{noise_2}\|^2 \nonumber \\
&\quad + 2\langle v_1^{(0)} - v_2^{(j)}, v_{noise_1} - v_{noise_2} \rangle. \tag{6}
\end{align}
Given the orthogonality condition (iii), the cross term vanishes:
\begin{align}
\langle v_1^{(0)} - v_2^{(j)}, v_{noise_1} - v_{noise_2} \rangle = 0, \tag{7}
\end{align}
thus,
\begin{align}
\|v_1 - v_2\|^2 &= \|v_1^{(0)} - v_2^{(j)}\|^2 + \|v_{noise_1} - v_{noise_2}\|^2 \nonumber \\
&\geq \|v_1^{(0)} - v_2^{(j)}\|^2. \tag{8}
\label{ineq:8}
\end{align}
Similarly, we have:
\begin{align}
 \|v_1 - v_2\|^2 &\geq \|v_1^{(0)} - v_2 \|^2. \tag{9}
 \label{ineq:9}
\end{align}

For the numerator of normalized margin, we have:
\begin{align}
\langle v_q, v_1^{(0)} - v_2 \rangle \geq \langle v_q, v_1 \rangle - \langle v_q, v_2 \rangle \tag{10}
\label{ineq:10}
\end{align}
\begin{align}
\langle v_q, v_1^{(0)} - v_2^{(j)} \rangle \geq \langle v_q, v_1 \rangle - \langle v_q, v_2 \rangle \tag{11}
\label{ineq:11}
\end{align}
based on the conditions (i) and (ii).

By combining inequalities \ref{ineq:9} and \ref{ineq:10}, we conclude that:
\begin{align}
\mu(v_q, v_1^{(0)}, v_2) \geq \mu(v_q, v_1, v_2) \tag{12}
\end{align}
Similarly, combining inequalities \ref{ineq:8} and \ref{ineq:11}, we conclude that:
\begin{align}
\mu(v_q, v_1^{(0)}, v_2^{(j)}) \geq \mu(v_q, v_1, v_2), \forall j \tag{13}
\end{align}
Therefore, the formulas (\ref{eq:TRI1}) and (\ref{eq:TRI2}) hold, proving that the text augmentation approach maintains or improves retrieval fidelity.
\end{proof}

The constraints in Theorem \ref{theorem: Text Augmentation} are based on the assumption that each generated text is of high quality and contains a portion of the original text's information. Given the text \( t_1 \) and the related query, the ideal generated vector \( v_1^{(0)} \) enhances retrieval fidelity by reducing noise and condensing query-relevant information from the text.

Building on the demonstrated effectiveness of text augmentation, we show that incorporating both QA pair vectors and event vectors into the text augmentation framework enhances retrieval fidelity more effectively than using a single type of generated text. Within the constraints of relevance enhancement and irrelevance consistency, the introduction of new high-quality generated vectors will only improve fidelity, making this conclusion clear.

\begin{theorem}[QAEA]
\label{theorem: QAEA}
Given a text \( t_1 \) most relevant to a query \( q \) and a competing text \( t_2 \), we generate sets of QA pair vectors \( \{v_{qa_1}^{(j)}\} \) and event vectors \( \{v_{event_1}^{(j)}\} \) for \( t_1 \), and \( \{v_{qa_2}^{(j)}\} \) and \( \{v_{event_2}^{(j)}\} \) for \( t_2 \), respectively, where \( j \) records the number of generated texts. Let \( v_1 \) and \( v_2 \) represent the original text vectors of \( t_1 \) and \( t_2 \). Given \( \{v_1^{(j)}\} = \{v_{qa_1}^{(j)}\} \cup \{v_{event_1}^{(j)}\} \), there exists at least one generated vector \( v_1^{(0)} \in \{v_1^{(j)}\} \) such that the normalized margins
\begin{equation}
\mu(v_q, v_1^{(0)}, v_2) \geq \mu(v_q, v_{qa_1}^{(j)}, v_2), \forall j \tag{14}
\label{eq:QA1}
\end{equation}
and
\begin{equation}
\mu(v_q, v_1^{(0)}, v_2) \geq \mu(v_q, v_{event_1}^{(j)}, v_2), \forall j \tag{15}
\label{eq:Event1}
\end{equation}
are satisfied under the following conditions:

(i) Relevance Enhancement: \(\exists j\), s.t. \( \langle v_q, v_{qa_1}^{(j)} \rangle \geq \langle v_q, v_1 \rangle \) or \( \langle v_q, v_{event_1}^{(j)} \rangle \geq \langle v_q, v_1 \rangle \);

(ii) Irrelevance Consistency: \(\forall j\), \( \langle v_q, v_{qa_2}^{(j)} \rangle \leq \langle v_q, v_2 \rangle \) and \( \langle v_q, v_{event_2}^{(j)} \rangle \leq \langle v_q, v_2 \rangle \);

(iii) Orthogonality: Any two generated text vectors across \( \{v_{qa_1}^{(j)}\} \) and \( \{v_{event_1}^{(j)}\} \) are orthogonal to each other.
\end{theorem}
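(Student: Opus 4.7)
The plan is to reduce Theorem~\ref{theorem: QAEA} to an elementary max-over-union observation built on top of Theorem~\ref{theorem: Text Augmentation}. The decisive fact is that the combined pool $\{v_1^{(j)}\} = \{v_{qa_1}^{(j)}\} \cup \{v_{event_1}^{(j)}\}$ contains each constituent family, so the best normalized margin obtainable over the union can only match or exceed the best margin achievable over either subset alone.

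First, I would verify that the preconditions of Theorem~\ref{theorem: Text Augmentation} are inherited when the generated vector set is taken to be the union $\{v_1^{(j)}\}$. The Relevance Enhancement clause (i) is stated as a disjunction, but either branch produces a witness lying inside the union, so the existential hypothesis of Theorem~\ref{theorem: Text Augmentation} is satisfied. Irrelevance Consistency (ii) holds pointwise for both QA and event families at $t_2$, hence over the union. Orthogonality (iii) explicitly covers any pair spanning $\{v_{qa_1}^{(j)}\}$ and $\{v_{event_1}^{(j)}\}$, so the cross-term vanishing step from the proof of Theorem~\ref{theorem: Text Augmentation} remains valid when applied to the merged pool.

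Second, I would take $v_1^{(0)} := \arg\max_{v \in \{v_1^{(j)}\}} \mu(v_q, v, v_2)$. Since $v_{qa_1}^{(j)} \in \{v_1^{(j)}\}$ for every $j$, maximality immediately yields inequality~(\ref{eq:QA1}); applying the identical argument with $v_{event_1}^{(j)}$ yields inequality~(\ref{eq:Event1}). As a bonus, invoking Theorem~\ref{theorem: Text Augmentation} with the union as the generated set gives $\mu(v_q, v_1^{(0)}, v_2) \geq \mu(v_q, v_1, v_2)$, connecting the combined augmentation back to the original text vector.

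The main obstacle is not algebraic but concerns the legitimacy of extending orthogonality to the merged pool: since QA pair vectors and event vectors are both distilled from the same source $t_1$, one could reasonably worry about shared components that violate orthogonality and reintroduce cross terms into the norm expansion. Condition (iii) is precisely the hypothesis that rules this out; once it is granted, the remainder of the proof collapses to a set-inclusion argument and requires no new manipulation of inner products or norms beyond what Theorem~\ref{theorem: Text Augmentation} already establishes.
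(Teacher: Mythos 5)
Your proposal is correct, and at the top level it mirrors the paper's own argument: both reduce the claim to selecting a single distinguished element from the union $\{v_1^{(j)}\} = \{v_{qa_1}^{(j)}\} \cup \{v_{event_1}^{(j)}\}$ and then exploiting the fact that the union contains both constituent families. The one substantive difference is \emph{which} functional you maximize. The paper takes $v_1^{(0)}$ to be the maximizer of the inner product $\langle v_q, \cdot \rangle$ and then defers to the machinery of Theorem~\ref{theorem: Text Augmentation} (noise decomposition, orthogonality, denominator control) to convert that into a statement about normalized margins. You instead take $v_1^{(0)} := \arg\max_{v \in \{v_1^{(j)}\}} \mu(v_q, v, v_2)$, which makes inequalities~(\ref{eq:QA1}) and~(\ref{eq:Event1}) immediate consequences of set inclusion: since every $v_{qa_1}^{(j)}$ and every $v_{event_1}^{(j)}$ lies in the pool over which the maximum is taken, and the second comparison vector $v_2$ is the same on both sides, no manipulation of inner products or norms is required. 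This is cleaner and also more self-contained than the paper's sketch, and it exposes something the paper's phrasing obscures: conditions (i)--(iii) are not actually needed to establish (\ref{eq:QA1}) and (\ref{eq:Event1}) as stated; they are only needed for the additional (and arguably more meaningful) comparison $\mu(v_q, v_1^{(0)}, v_2) \geq \mu(v_q, v_1, v_2)$ against the original text vector, which you correctly recover by invoking Theorem~\ref{theorem: Text Augmentation} on the union --- checking, as you do, that the disjunctive Relevance Enhancement clause still supplies a witness inside the union and that Orthogonality licenses the vanishing cross terms. Your approach buys rigor and economy; the paper's buys a uniform narrative in which both theorems appear to rest on the same fidelity calculation.
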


\begin{proof}
We start by noting that the set \( \{v_1^{(j)}\} = \{v_{qa_1}^{(j)}\} \cup \{v_{event_1}^{(j)}\} \). From the relevance enhancement condition, there exists \( v_1^{(0)} \in \{v_{qa_1}^{(j)}\} \cup \{v_{event_1}^{(j)}\} \) that maximizes the inner product with \( v_q \). The remaining proof follows similarly to the proof of Theorem \ref{theorem: Text Augmentation} and can be easily derived.
\end{proof}

Theorem \ref{theorem: QAEA} proposes that integrating more generated text vector representations is more likely to increase retrieval fidelity compared to using only one type of generated text vector representation.

\begin{table*}[]
\centering
\caption{The NDCG (\(\times 100\)) comparisons of baselines and our QAEA-DR on four datasets}
\label{tab:main_result}
\resizebox{\textwidth}{!}{%
\begin{tabular}{l|c l|l l|l l|l l|c l|l l|l l|l l}
\hline
\multirow{3}{*}{} & \multicolumn{8}{c|}{sCNPR} & \multicolumn{8}{c}{T2Retrieval} \\
\cline{2-17}
                  & \multicolumn{4}{c|}{m3e} & \multicolumn{4}{c|}{bge-zh} & \multicolumn{4}{c|}{m3e} & \multicolumn{4}{c}{bge-zh} \\
\cline{2-17}
                  & \multicolumn{2}{c|}{NDCG@1} & \multicolumn{2}{c|}{NDCG@10} & \multicolumn{2}{c|}{NDCG@1} & \multicolumn{2}{c|}{NDCG@10} & \multicolumn{2}{c|}{NDCG@1} & \multicolumn{2}{c|}{NDCG@10} & \multicolumn{2}{c|}{NDCG@1} & \multicolumn{2}{c}{NDCG@10} \\
\hline
Baseline          & \multicolumn{2}{c|}{70.23} & \multicolumn{2}{c|}{79.31} & \multicolumn{2}{c|}{73.17} & \multicolumn{2}{c|}{81.73} & \multicolumn{2}{c|}{78.33} & \multicolumn{2}{c|}{86.59} & \multicolumn{2}{c|}{82.19} & \multicolumn{2}{c}{89.21} \\
\hline
                  & GPT & GLM & GPT & GLM & GPT & GLM & GPT & GLM & GPT & GLM & GPT & GLM & GPT & GLM & GPT & GLM \\
QAEA (TRI)        & \textbf{77.92} & \textbf{77.92} & \textbf{85.03} & 84.99 & \textbf{83.15} & 81.83 & \textbf{88.96} & 88.22 & 74.60 & 74.77 & 83.65 & 83.58 & 80.78 & 80.52 & 88.51 & 88.34 \\
QAEA (TMO)        & 72.84 & 72.71 & 81.25 & 80.92 & 75.74 & 75.86 & 83.38 & 83.27 & 79.13 & \textbf{79.23} & \textbf{87.24} & 87.19 & 82.74 & \textbf{83.09} & 89.61 & \textbf{89.74} \\
\hline
\hline
\multirow{3}{*}{} & \multicolumn{8}{c|}{HotpotQA} & \multicolumn{8}{c}{MS MARCO} \\
\cline{2-17}
                  & \multicolumn{4}{c|}{dpr} & \multicolumn{4}{c|}{bge-en} & \multicolumn{4}{c|}{dpr} & \multicolumn{4}{c}{bge-en} \\
\cline{2-17}
                  & \multicolumn{2}{c|}{NDCG@1} & \multicolumn{2}{c|}{NDCG@10} & \multicolumn{2}{c|}{NDCG@1} & \multicolumn{2}{c|}{NDCG@10} & \multicolumn{2}{c|}{NDCG@1} & \multicolumn{2}{c|}{NDCG@10} & \multicolumn{2}{c|}{NDCG@1} & \multicolumn{2}{c}{NDCG@10} \\
\hline
Baseline          & \multicolumn{2}{c|}{79.38} & \multicolumn{2}{c|}{74.24} & \multicolumn{2}{c|}{95.79} & \multicolumn{2}{c|}{92.70} & \multicolumn{2}{c|}{70.23} & \multicolumn{2}{c|}{80.64} & \multicolumn{2}{c|}{95.93} & \multicolumn{2}{c}{98.02} \\
\hline
                  & GPT & GLM & GPT & GLM & GPT & GLM & GPT & GLM & GPT & GLM & GPT & GLM & GPT & GLM & GPT & GLM \\
QAEA (TRI)        & \textbf{85.26} & 85.17 & \textbf{78.92} & 78.47 & \textbf{96.79} & 96.72 & \textbf{93.61} & 93.49 & \textbf{78.20} & 78.09 & \textbf{86.95} & 86.73 & 94.93 & 94.90 & 97.64 & 97.60 \\
QAEA (TMO)        & 81.97 & 81.66 & 76.01 & 75.96 & 96.06 & 95.99 & 92.91 & 92.83 & 74.33 & 74.96 & 83.80 & 83.72 & \textbf{96.21} & 96.17 & \textbf{98.19} & 98.15 \\
\hline
\end{tabular}%
}
\\[2pt]
\raggedright
The values in bold represent the largest NDCG.
\end{table*}

\section{Experiment}
\subsection{Datasets and Baselines}
\textit{Datasets.} We utilize four passage retrieval datasets to evaluate our QAEA-DR. Due to the high computational cost of multiple LLM tasks, we used a subset of the complete open datasets for our experiments.
To ensure the unbiased selection of dataset subsets, we applied the random.sample() method in Python to randomly select texts from the corpus.

\begin{itemize}
\item{\textit{sCNPR:} a proprietary small-scale Chinese News Passage Retrieval dataset (sCNPR) that we created from real-world news articles and user queries. sCNPR covers diverse topics including economic, social, political, scientific, and entertainment news. sCNPR contains 1083 news texts and 2382 user queries, with an average text length of 655 words.} 

\item{\textit{T2Retrieval} \cite{xie2023t2ranking}: an open Chinese passage retrieval dataset with 4-level relevance labels, useful for assessing models' ability to distinguish fine-grained relevance. We sampled 5000 instances from training set, with an average text length of 438 words.}

\item{\textit{MS MARCO} \cite{nguyen2016ms}: an open English passage retrieval dataset from Bing, featuring single relevance labels. We sampled 5000 instances from training set, with an average text length of 58 words.}

\item{\textit{HotpotQA} \cite{yang2018hotpotqa}: an open English passage retrieval dataset from Wikipedia, featuring single relevance labels. We sampled 5000 instances from training set, with an average text length of 67 words.}

\end{itemize}

\textit{Baselines.} 
We select four sentence embedding models that do not use text augmentation and embed only the original texts into a vector database (\(\text{VDB}_\text{ori}\)) as our baselines for dense retrieval. We evaluate our QAEA-DR method, which applies text augmentation for augmented vector database (\(\text{VDB}_\text{final}\)), against these baselines to demonstrate its performance improvement under the same standard retrieval process. We utilize the Chinese sentence embedding models m3e-base and bge-large-zh-v1.5 \cite{xiao2023c}. For English, we employ the embedding models dpr-ctx\textunderscore{}encoder-multiset-base \cite{karpukhin2020dense} and bge-large-en-v1.5. These sentence embedding models represent effective and illustrative examples of sentence embedding techniques.

Besides, our LLM-based QAG method is compared with the currently popular LMQG model \cite{ushio2023empirical} to demonstrate the effectiveness of our approach. In our experiment, we employ the End2end T5-large-squad-QAG model of LMQG for comparison. On the other hand, there are no effective open-domain multi-event extraction models available for comparison.

\subsection{Implementation Details}
\textit{LLM Implementation.} We primarily employ two LLMs as text augmentation generators and another LLM as a evaluator for evaluating the quality of the generated texts. In our experiments, we utilize API calls to GPT-3.5-turbo and locally deployed ChatGLM3-6B \cite{du2022glm} as the two generators for QAG and EE. Additionally, in Section \ref{More LLMs Generation Results}, we deployed Mistral-7B \cite{jiang2023mistral} and Llama-3-8B \cite{touvron2023llama} to compare the performance of different open-source LLMs. We set the temperature parameter to 0 to ensure the generated text is precise and faithful to the original input. In our experiments, since ChatGLM3-6B is not friendly with JSON output, we manually create 20 JSON samples for fine-tuning to ensure the stable JSON output. Additionally, we select DeepSeek-v2\footnote{https://platform.deepseek.com} as the evaluator for scoring the quality of generated texts, using API calls for text generation. Given that DeepSeek-v2 significantly outperforms GPT-3.5-turbo and ChatGLM3-6B on LLM evaluation benchmarks, it serves as an effective independent evaluator to enhance the reliability of the output assessments. All experiments run on the NVIDIA RTX 3090 GPU (24 GB).

\textit{Experimental Variables.}
In addition to using different LLMs, we analyze the impact of various embedding models \(\Phi\), thresholds \(\tau\) (default set at 9 out of 10), dataset size, and text organization strategies (Texts Remain Independent (TRI) and Texts Merge into One (TMO)) on QAEA-DR. We also evaluate the separate effects of QA pairs and events.

\subsection{Evaluation Metrics}
We employ Normalized Discounted Cumulative Gain (NDCG) as the critical metric within the MTEB \cite{muennighoff2023mteb} evaluation framework to assess the retrieval performance. Besides, MTEB also records metrics like mean reciprocal rank (MRR), recall, precision, etc. NDCG measures retrieval ranking quality by accounting for the positions of relevant documents, providing a more comprehensive evaluation compared to other metrics.

\subsection{Main Results}
In this section, we present the main experimental results comparing the performance of QAEA-DR and the baselines across four datasets based on NDCG@1 and NDCG@10 metrics. Table \ref{tab:main_result} displays the QAEA-DR results from a complete three-step prompting process.

\begin{figure}[ht]
  \centering
  \includegraphics[width=1\linewidth]{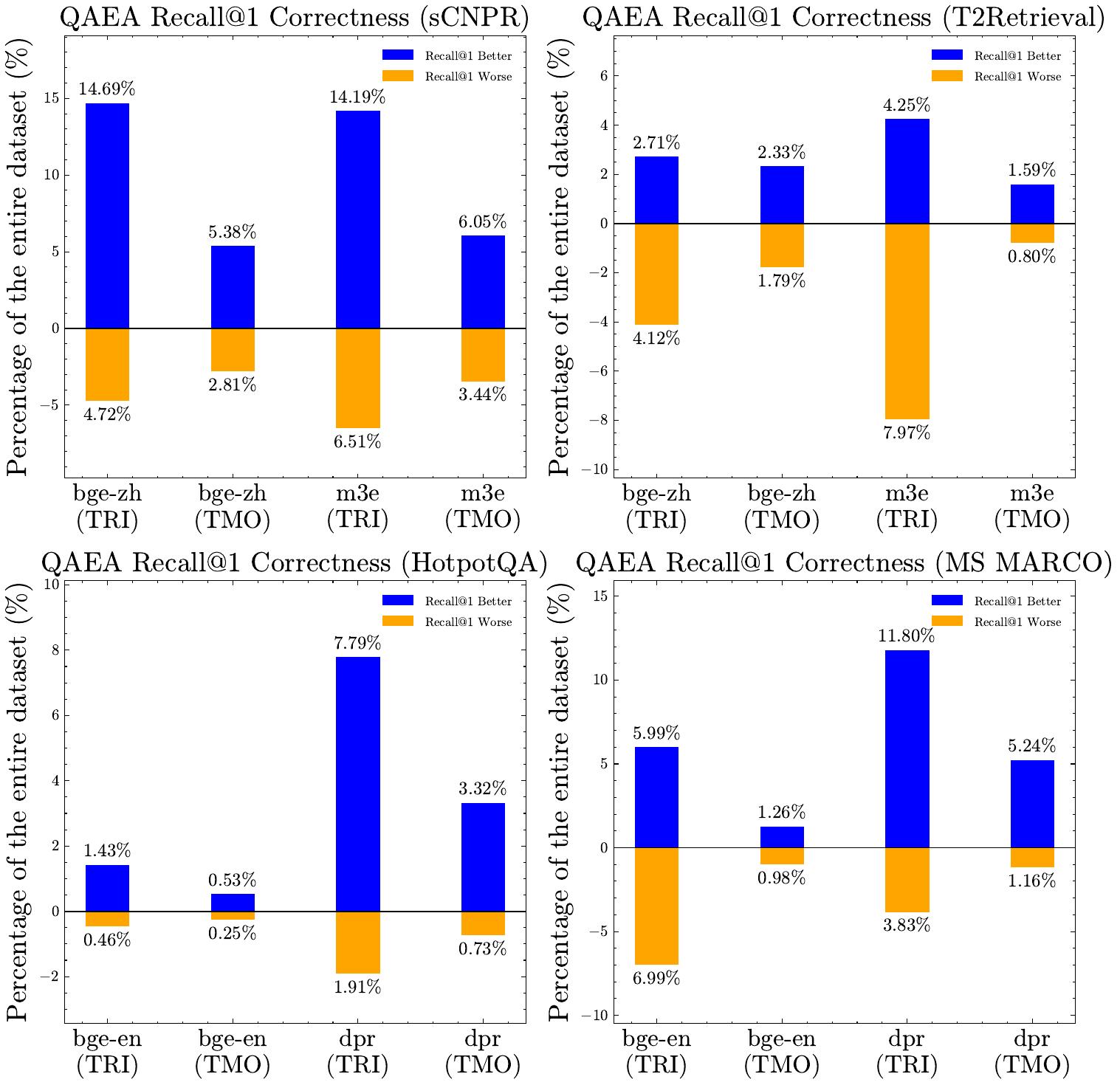}
  \caption{QAEA-DR \textbf{vs} Baseline on Recall@1. The blue bar represents the percentage of the entire dataset where
QAEA-DR correctly recalls at rank 1, while the baseline does not; the orange bar indicates the opposite. The difference between the blue and orange bars quantifies the actual improvement that QAEA-DR provides over the baseline.}
  \label{worsebetter_analysis}
\end{figure}

\textit{Results Across Datasets.}
Overall, our QAEA-DR outperforms the baselines across four datasets, achieving optimal results as shown in Table \ref{tab:main_result}. Our method exhibits effectiveness and robustness across datasets with varying languages, text sources, and text lengths. Notably, on the sCNPR dataset, which consists of relatively long news texts, QAEA-DR shows the most significant improvement, with the average NDCG@1 score rising from 71.70\% to 80.54\%. The results from the sCNPR dataset are significant since sCNPR consists of long text data and has not been used for embedding model training. On the HotpotQA and MS MARCO datasets, which primarily consist of short texts, there is an average increase of approximately 4 points in NDCG@1. Although the T2Retrieval dataset presents a challenging task with its fine-grained scoring metrics, QAEA-DR still manages to achieve an average increase of nearly one point in NDCG@1.

\textit{Results Across Embedding Models.} Table \ref{tab:main_result} demonstrates that QAEA-DR improves retrieval performance across four embedding models. It is worth noting that the initial retrieval performance of the embedding models affects the extent of improvement achieved by QAEA-DR. For instance, in English embedding models, QAEA-DR increases the average NDCG@1 score by 6.93\% on the traditional dpr-ctx\textunderscore{}encoder-multiset-base (dpr) model and by 0.64\% on the more advanced bge-large-en-v1.5 (bge-en) model. In Chinese embedding scenarios, both the m3e-base (m3e) and bge-large-zh-v1.5 (bge-zh) models show comparable retrieval capabilities, and QAEA-DR delivers similar enhancements to each. Therefore, QAEA-DR not only significantly enhances embedding models with weaker retrieval capabilities but also consistently improves those already performing well.

\begin{table}[h]
\centering
\caption{Comparisons of LLM Single-Generation and with Regeneration Performance in QAEA (TRI): NDCG@1 (\(\times 100\))}
\label{tab:compare_TRI}
\setlength{\tabcolsep}{2.5pt}
\begin{tabular}{l|ll|ll|ll|ll|}
\hline
\multirow{2}{*}{} & \multicolumn{4}{c|}{sCNPR}                                 & \multicolumn{4}{c}{T2Retrieval}                                         \\ \cline{2-9} 
                  & \multicolumn{2}{c|}{m3e}        & \multicolumn{2}{c|}{bge-zh}     & \multicolumn{2}{c|}{m3e}         & \multicolumn{2}{c}{bge-zh}      \\ \hline
LLM               & GPT           & GLM            & GPT            & GLM            & GPT            & GLM            & GPT            & GLM            \\
Single-generation     & 77.34          & 67.88          & 83.04          & 74.90          & 74.01         & 74.03          & 80.28          & 80.33          \\
+Regeneration   & \textbf{77.92} & \textbf{77.92} & \textbf{83.15} & \textbf{81.83} & \textbf{74.60} & \textbf{74.77} & \textbf{80.78} & \textbf{80.52} \\ \hline
\hline
\multirow{2}{*}{} & \multicolumn{4}{c|}{HotpotQA}                                    & \multicolumn{4}{c}{MS MARCO}                                      \\ \cline{2-9} 
                  & \multicolumn{2}{c|}{dpr}        & \multicolumn{2}{c|}{bge-en}     & \multicolumn{2}{c|}{dpr}         & \multicolumn{2}{c}{bge-en}      \\ \hline
LLM               & GPT           & GLM            & GPT            & GLM            & GPT            & GLM            & GPT            & GLM            \\
Single-generation     & 84.83          & 83.82          & 96.61          & 96.45          & 77.28         & 77.88          & 94.80           & 94.45          \\
+Regeneration   & \textbf{85.26} & \textbf{85.17} & \textbf{96.79} & \textbf{96.72} & \textbf{78.20} & \textbf{78.09} & \textbf{94.93} & \textbf{94.90} \\ \hline
\end{tabular}
\raggedright
\\[2pt]
The values in bold represent the largest NDCG@1.
\end{table}

\begin{table}[h]
\centering
\caption{Comparisons of LLM Single-Generation and with Regeneration Performance in QAEA (TMO): NDCG@1 (\(\times 100\))}
\label{tab:compare_TMO}
\setlength{\tabcolsep}{2.5pt}
\begin{tabular}{l|ll|ll|ll|ll}
\hline
\multirow{2}{*}{} & \multicolumn{4}{c|}{sCNPR}                                  & \multicolumn{4}{c}{T2Retrieval}                                         \\ \cline{2-9}
                  & \multicolumn{2}{c|}{m3e}         & \multicolumn{2}{c|}{bge-zh}     & \multicolumn{2}{c|}{m3e}         & \multicolumn{2}{c}{bge-zh}      \\ \hline
LLM               & GPT            & GLM            & GPT            & GLM            & GPT            & GLM            & GPT            & GLM            \\
Single-generation & 72.38          & 71.45          & 75.25          & 74.01          & 79.07          & 79.07          & 82.33          & 82.53          \\
+Regeneration     & \textbf{72.84} & \textbf{72.71} & \textbf{75.74} & \textbf{75.86} & \textbf{79.13} & \textbf{79.23} & \textbf{82.74} & \textbf{83.09} \\ \hline
\hline
\multirow{2}{*}{} & \multicolumn{4}{c|}{HotpotQA}                                     & \multicolumn{4}{c}{MS MARCO}                                      \\ \cline{2-9}
                  & \multicolumn{2}{c|}{dpr}         & \multicolumn{2}{c|}{bge-en}     & \multicolumn{2}{c|}{dpr}         & \multicolumn{2}{c}{bge-en}      \\ \hline
LLM               & GPT            & GLM            & GPT            & GLM            & GPT            & GLM            & GPT            & GLM            \\
Single-generation & 81.82          & 80.66          & 95.91          & 95.87          & 73.51          & 74.15          & 96.10           & 96.04          \\
+Regeneration     & \textbf{81.97} & \textbf{81.66} & \textbf{96.06} & \textbf{95.99} & \textbf{74.33} & \textbf{74.96} & \textbf{96.21} & \textbf{96.17} \\ \hline
\end{tabular}
\raggedright
\\[2pt]
The values in bold represent the largest NDCG@1.
\end{table}

\textit{Impact of Text Organization.}
We also observe obviously different retrieval performances between the two final generated text forms: Texts Remain Independent (TRI) and Texts Merge into One (TMO), as described in Section \ref{sec:overview}. Overall, TRI can be considered a form of text decomposition, whereas TMO involves reassembling the decomposed texts. 
From Table \ref{tab:main_result}, we observe the following:

\begin{table*}[h]
\centering
\caption{The NDCG (\(\times 100\)) comparisons of Ablation Study on QAEA (TMO)}
\label{tab:ablation}
\resizebox{\textwidth}{!}{%
\begin{tabular}{l|llllllll|llllllll}
\hline
\multirow{3}{*}{} & \multicolumn{8}{c|}{sCNPR} & \multicolumn{8}{c}{T2Retrieval} \\ \cline{2-17} 
 & \multicolumn{4}{c|}{m3e} & \multicolumn{4}{c|}{bge-zh} & \multicolumn{4}{c|}{m3e} & \multicolumn{4}{c}{bge-zh} \\ \cline{2-17} 
 & \multicolumn{2}{c|}{NDCG@1} & \multicolumn{2}{c|}{NDCG@10} & \multicolumn{2}{c|}{NDCG@1} & \multicolumn{2}{c|}{NDCG@10} & \multicolumn{2}{c|}{NDCG@1} & \multicolumn{2}{c|}{NDCG@10} & \multicolumn{2}{c|}{NDCG@1} & \multicolumn{2}{c}{NDCG@10} \\ \hline
Baseline (Original) & \multicolumn{2}{c|}{70.23} & \multicolumn{2}{c|}{79.31} & \multicolumn{2}{c|}{73.17} & \multicolumn{2}{c|}{81.73} & \multicolumn{2}{c|}{78.33} & \multicolumn{2}{c|}{86.59} & \multicolumn{2}{c|}{82.19} & \multicolumn{2}{c}{89.21} \\ \hline
 & GPT & \multicolumn{1}{l|}{GLM} & GPT & \multicolumn{1}{l|}{GLM} & GPT & \multicolumn{1}{l|}{GLM} & GPT & GLM & GPT & \multicolumn{1}{l|}{GLM} & GPT & \multicolumn{1}{l|}{GLM} & GPT & \multicolumn{1}{l|}{GLM} & GPT & GLM \\
QA & 68.85 & \multicolumn{1}{l|}{67.38} & 77.67 & \multicolumn{1}{l|}{76.44} & 72.12 & \multicolumn{1}{l|}{71.41} & 80.26 & 79.74 & 74.79 & \multicolumn{1}{l|}{73.89} & 83.48 & \multicolumn{1}{l|}{82.99} & 78.61 & \multicolumn{1}{l|}{78.32} & 85.95 & 85.81 \\
Event & 68.30 & \multicolumn{1}{l|}{68.43} & 77.29 & \multicolumn{1}{l|}{77.43} & 71.41 & \multicolumn{1}{l|}{71.50} & 79.74 & 79.60 & 70.55 & \multicolumn{1}{l|}{70.49} & 80.47 & \multicolumn{1}{l|}{80.62} & 76.67 & \multicolumn{1}{l|}{76.38} & 84.69 & 84.61 \\
Original+QA & 72.54 & \multicolumn{1}{l|}{72.04} & 81.07 & \multicolumn{1}{l|}{80.47} & 76.07 & \multicolumn{1}{l|}{75.90} & 83.51 & 83.25 & 78.73 & \multicolumn{1}{l|}{78.35} & 87.04 & \multicolumn{1}{l|}{86.71} & 82.85 & \multicolumn{1}{l|}{82.69} & 89.62 & 89.64 \\
Original+Event & 72.51 & \multicolumn{1}{l|}{72.38} & 81.01 & \multicolumn{1}{l|}{80.75} & 74.48 & \multicolumn{1}{l|}{74.85} & 82.59 & 82.67 & 77.24 & \multicolumn{1}{l|}{77.39} & 86.01 & \multicolumn{1}{l|}{86.08} & 82.26 & \multicolumn{1}{l|}{82.19} & 89.26 & 89.21 \\
QA+Event & 69.02 & \multicolumn{1}{l|}{69.02} & 77.80 & \multicolumn{1}{l|}{77.43} & 72.08 & \multicolumn{1}{l|}{72.25} & 80.31 & 80.05 & 75.72 & \multicolumn{1}{l|}{75.79} & 84.52 & \multicolumn{1}{l|}{84.37} & 80.00 & \multicolumn{1}{l|}{80.45} & 87.18 & 87.63 \\
Original+QA+Event   (QAEA) & \textbf{72.84} & \multicolumn{1}{l|}{72.71} & \textbf{81.25} & \multicolumn{1}{l|}{80.92} & 75.74 & \multicolumn{1}{l|}{\textbf{75.86}} & \textbf{83.38} & 83.27 & 79.13 & \multicolumn{1}{l|}{\textbf{79.23}} & \textbf{87.24} & \multicolumn{1}{l|}{87.19} & 82.74 & \multicolumn{1}{l|}{\textbf{83.09}} & 89.61 & \textbf{89.74} \\ \hline
\hline
\multirow{3}{*}{} & \multicolumn{8}{c|}{HotpotQA} & \multicolumn{8}{c}{MS MARCO} \\ \cline{2-17} 
 & \multicolumn{4}{c|}{dpr} & \multicolumn{4}{c|}{bge-en} & \multicolumn{4}{c|}{dpr} & \multicolumn{4}{c}{bge-en} \\ \cline{2-17} 
 & \multicolumn{2}{c|}{NDCG@1} & \multicolumn{2}{c|}{NDCG@10} & \multicolumn{2}{c|}{NDCG@1} & \multicolumn{2}{c|}{NDCG@10} & \multicolumn{2}{c|}{NDCG@1} & \multicolumn{2}{c|}{NDCG@10} & \multicolumn{2}{c|}{NDCG@1} & \multicolumn{2}{c}{NDCG@10} \\ \hline
Baseline (Original) & \multicolumn{2}{c|}{79.38} & \multicolumn{2}{c|}{74.24} & \multicolumn{2}{c|}{95.79} & \multicolumn{2}{c|}{92.70} & \multicolumn{2}{c|}{70.23} & \multicolumn{2}{c|}{80.64} & \multicolumn{2}{c|}{95.93} & \multicolumn{2}{c}{98.02} \\ \hline
 & GPT & \multicolumn{1}{l|}{GLM} & GPT & \multicolumn{1}{l|}{GLM} & GPT & \multicolumn{1}{l|}{GLM} & GPT & GLM & GPT & \multicolumn{1}{l|}{GLM} & GPT & \multicolumn{1}{l|}{GLM} & GPT & \multicolumn{1}{l|}{GLM} & GPT & GLM \\
QA & 79.07 & \multicolumn{1}{l|}{79.27} & 72.12 & \multicolumn{1}{l|}{72.10} & 95.33 & \multicolumn{1}{l|}{95.33} & 90.76 & 90.44 & 71.67 & \multicolumn{1}{l|}{71.78} & 81.85 & \multicolumn{1}{l|}{81.38} & 93.98 & \multicolumn{1}{l|}{93.28} & 96.98 & 96.53 \\
Event & 73.01 & \multicolumn{1}{l|}{72.24} & 67.80 & \multicolumn{1}{l|}{67.18} & 94.44 & \multicolumn{1}{l|}{94.40} & 89.63 & 88.83 & 65.29 & \multicolumn{1}{l|}{64.80} & 76.56 & \multicolumn{1}{l|}{76.10} & 91.56 & \multicolumn{1}{l|}{90.86} & 95.36 & 94.93 \\
Original+QA & 82.47 & \multicolumn{1}{l|}{\textbf{82.55}} & 76.28 & \multicolumn{1}{l|}{\textbf{76.38}} & 95.91 & \multicolumn{1}{l|}{95.91} & 92.83 & 92.85 & 74.50 & \multicolumn{1}{l|}{74.48} & \textbf{84.03} & \multicolumn{1}{l|}{84.01} & 96.08 & \multicolumn{1}{l|}{96.12} & 98.13 & 98.14 \\
Original+Event & 80.81 & \multicolumn{1}{l|}{80.50} & 75.32 & \multicolumn{1}{l|}{75.27} & 96.10 & \multicolumn{1}{l|}{96.06} & \textbf{92.95} & 92.89 & 72.44 & \multicolumn{1}{l|}{73.19} & 82.81 & \multicolumn{1}{l|}{83.05} & 95.85 & \multicolumn{1}{l|}{95.89} & 98.02 & 98.03 \\
QA+Event & 78.69 & \multicolumn{1}{l|}{79.15} & 72.28 & \multicolumn{1}{l|}{72.02} & 95.56 & \multicolumn{1}{l|}{95.60} & 91.44 & 91.37 & 72.33 & \multicolumn{1}{l|}{72.10} & 82.31 & \multicolumn{1}{l|}{82.02} & 94.56 & \multicolumn{1}{l|}{94.67} & 97.29 & 97.35 \\
Original+QA+Event   (QAEA) & 81.97 & \multicolumn{1}{l|}{81.66} & 76.01 & \multicolumn{1}{l|}{75.96} & \textbf{96.06} & \multicolumn{1}{l|}{95.99} & 92.91 & 92.83 & 74.33 & \multicolumn{1}{l|}{\textbf{74.96}} & 83.80 & \multicolumn{1}{l|}{83.72} & \textbf{96.21} & \multicolumn{1}{l|}{96.17} & \textbf{98.19} & 98.15 \\ \hline
\end{tabular}%
}
\\[2pt]
\raggedright
The values in bold represent the largest NDCG.
\end{table*}

\begin{itemize}
\item{QAEA (TRI) achieves higher peak results on the sCNPR, HotpotQA, and MS MARCO datasets. It records an average NDCG@1 score of 86.04\% across these datasets, exceeding the score of 82.98\% recorded by TMO. As outlined in Section \ref{sec:theory}, QAEA satisfies Relevance Enhancement and Irrelevance Consistency under ideal conditions. However, in practical settings such as T2Retrieval, where texts are lengthy and filled with irrelevant content, TRI may increase the risk of irrelevance mismatching due to retaining unclear, irrelevant generated texts.}
\item{QAEA (TMO) demonstrates consistent stability, outperforming baselines across all datasets. For example, in the T2Retrieval dataset, TMO enhances the density of key information and reduces the density of noise by reassembling generated texts, resulting in better results than TRI.}
\item{
From Fig. \ref{worsebetter_analysis}, QAEA with either TMO or TRI overall improves recall@1 performance compared to the baselines, as indicated by the longer blue bars than orange. TMO consistently exhibits fewer instances of reduced recall@1 performance compared to TRI (shorter orange bars) across different datasets and embedding models, indicating more stable performance. In datasets like T2Retrieval with some typos, redundancies, or irrelevant texts, TRI may extract noisy texts, leading to an increase in poor performance cases; TMO merges generated texts, minimizing the impact of noisy text and resulting in fewer recall@1 worse cases.}
\end{itemize}

\textit{Impact of LLM Generation.}
To demonstrate the robustness of text enhancement methods across different LLMs, we conducted the experimental analysis using two models: ChatGLM3-6B (GLM) as a smaller-scale model and GPT-3.5-turbo (GPT) as a medium-scale model. From Table \ref{tab:main_result} we observe that GPT exhibits better performance than GLM, as expected. However, the average NDCG@1 difference between GPT and GLM is only 0.07\%. Table \ref{tab:compare_TRI} and Table \ref{tab:compare_TMO} illustrate that the performance gaps between the two LLMs increase from 0.07\% to 1.1\% if the regeneration mechanism is absent. This demonstrates the robustness of the complete three-step prompting method under different LLMs as generators. Specifically, Table \ref{tab:compare_TRI} compares the retrieval performance of QAEA (TRI) in two scenarios: (1) using single generation (Single-generation) and (2) using generation with additional scoring-based quality evaluation and regeneration (+Regeneration). Similarly, Table \ref{tab:compare_TMO} compares the QAEA (TMO) performance between Single-generation and enhanced with Regeneration. Regeneration with scoring-based quality evaluation consistently improves retrieval performance across all cases and confirms the method's effectiveness.

\subsection{Ablation Study on QAEA}
\label{subsec:ablation}
We demonstrate the effectiveness of the QA and Event components of QAEA framework. From the equation \(\text{VDB}_{\text{final}} = \text{VDB}_{\text{ori}} + \text{VDB}_{\text{QA}} + \text{VDB}_{\text{EVENT}}\), we understand that QAEA incorporates three textual representations: original texts, QA pairs, and events. Hence, we consider seven scenarios derived from different combinations of these components: (1) Baseline (or Original, which includes only the original texts); (2) QA (only QA pair texts); (3) Event (only event texts); (4) Original+QA; (5) Original+Event; (6) QA+Event; (7) Original+QA+Event (complete QAEA implementation). 

\begin{figure}
  \centering
  \includegraphics[width=0.95\linewidth]{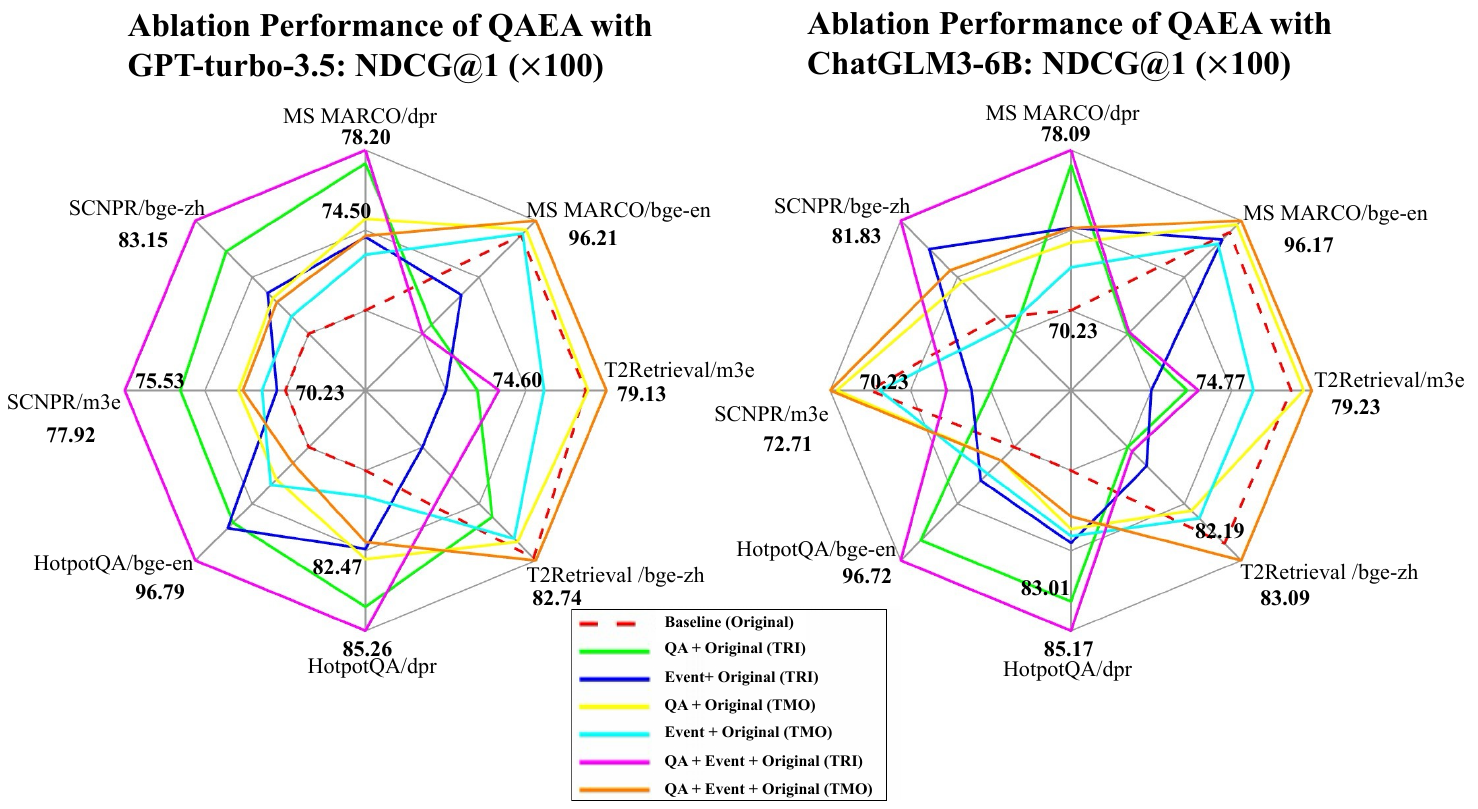}
  \caption{Ablation Performance of QAEA: NDCG@1 (\(\times 100\))}
  \label{experimental_analysis}
\end{figure}

Table \ref{tab:ablation} shows comparison results of the ablation study on QAEA. For discussion convenience, the table only show the QAEA (TMO) case and the conclusion also applies to QAEA (TRI). First of all, the complete QAEA implementation, Original+QA+Event, shows superior performance across most metrics compared to the subsets. When removing the QA component, the NDCG@1 average performance of QAEA drops by 0.15\%. When the Event component is removed, the NDCG@1 average performance drops by 0.90\%. Additionally, both Original+Event and Original+QA exceed the performance of the baselines, which further demonstrates the effectiveness and importance of both QA and Event components. We observe that Original+QA outperforms Original+Event; we believe this is because QA pairs are more likely to match the query both in form and semantics. Moreover, even without the original texts, single QA or Event components show performance close to the baselines. In the case of the MS MARCO dataset with the DPR model, they even outperform the baselines, which suggests the potential for these components to replace rather than merely augment the vector database.

Additionally, Fig. \ref{experimental_analysis} clearly shows the ablation performance of both QAEA (TMO) and QAEA (TRI). We observe that QA+Event+Original (TRI) or QA+Event+Original (TMO) achieve optimal performance, with many subsets also performing well.

\begin{table*}[t]
\centering
\caption{The NDCG (\(\times 100\)) performance comparison of different LLMs with QAEA (TMO)}
\label{tab:more_llm_expanded}
\resizebox{\textwidth}{!}{%
\begin{tabular}{l|cc|cc|cc|cc}
\hline
\multicolumn{1}{c|}{\multirow{3}{*}{Models}} 
 & \multicolumn{4}{c|}{HotpotQA} 
 & \multicolumn{4}{c}{MS MARCO} 
 \\
\cline{2-9}
 & \multicolumn{2}{c|}{dpr} 
 & \multicolumn{2}{c|}{bge-en} 
 & \multicolumn{2}{c|}{dpr} 
 & \multicolumn{2}{c}{bge-en} 
 \\
\cline{2-9}
 & NDCG@1 & NDCG@10 & NDCG@1 & NDCG@10 
 & NDCG@1 & NDCG@10 & NDCG@1 & NDCG@10 
 \\
\hline
Baseline & 79.38 & 74.24 & 95.79 & 92.70 & 70.23 & 80.64 & 95.93  & 98.02 \\ \hline
GPT-3.5-turbo & \textbf{81.97} & \textbf{76.01} & \textbf{96.06} & \textbf{92.91} & 74.33 & \textbf{83.80} & \textbf{96.21} & \textbf{98.19} \\
ChatGLM3-6B  & 81.66 & 75.96 & 95.99 & 92.83 & \textbf{74.96} & 83.72 & 96.17 & 98.15 \\
ChatGLM3-6B\(\dagger\)   & 80.66 & 75.41 & 95.87 & 92.79 & 74.15 & 83.68 & 96.04 & 98.12 \\
Mistral-7B\(\dagger\)    & 80.18 & 74.78 & 95.91 & 92.81 & 74.20 & 83.76 & 96.12  & 98.12 \\
Llama-3-8B\(\dagger\)    & 80.55 & 75.14 & 95.99 & 92.82 & 72.91 & 82.48 & 96.15  & 98.16 \\
\hline
\end{tabular}
}
\\[2pt]
\raggedright
\(\dagger\) denotes zero-shot prompting without optimization through evaluation and regeneration mechanisms. 
\end{table*}

\subsection{Case Study}
To better show how generated QA pairs and events address key information loss in long and noisy texts, we present the outputs of QAEA-DR via GPT-3.5-turbo on two cases from the MS MARCO dataset, as shown in Table \ref{tab:casestudy}. Table \ref{tab:casestudy} displays the generated texts with the highest score for each case.

In Case 1, the original text is relatively short but contains garbled texts ``Myers\textbackslash{}u00e2\textbackslash{}u0080\textbackslash{}u0093Briggs'' that does not match the query phrase ``Myers-briggs personality,'' which results in a low vector similarity score and ranking in retrieval. The generated QA pairs and events fix this noisy text issue via LLMs and extract structured key information to significantly boost similarity scores and retrieval rankings with the query. Besides, Case 2 presents an example of a longer text and the query relates to the closing house costs. The key phrase in the original text is ``Closing costs are roughly 2-2.25\% of the purchase price'' along with the keyword ``home.'' Due to the long original text, the term ``home'' is mentioned only once and may be lost during embedding. The generated QA pairs and events condense this key information and shorten the text, which enhances the fidelity of retrieval.

\subsection{More LLMs Generation Results}
\label{More LLMs Generation Results}
For further experiments on QAEA-DR with more open-source LLMs as text augmentation generators, we adopt Mistral-7B \cite{jiang2023mistral} and Llama-3-8B \cite{touvron2023llama}. Both Mistral and Llama use standard zero-shot prompts without evaluation and regeneration steps, while GPT-3.5-turbo and ChatGLM3-6B apply scoring-based evaluation and regeneration to improve output quality.

The results in Table \ref{tab:more_llm_expanded} show that GPT achieves the highest NDCG@1, especially with bge-en. GLM is slightly lower but still competitive. Mistral and Llama perform slightly weaker than GLM with regeneration. However, Mistral and Llama are likely to improve if they also support regeneration. Among open-source LLMs, ChatGLM3-6B is a strong choice for QAEA-DR. It supports both Chinese and English and has a smaller parameter size. All three open-source LLMs outperform the baselines and show performance close to GPT-3.5-turbo on English datasets. These findings confirm that small-scale open-source LLMs can provide effective text augmentation for QAEA-DR.

\begin{table}[]
\centering
\caption{Case study on QAEA-DR}
\label{tab:casestudy}
\resizebox{\columnwidth}{!}{%
\begin{tabular}{ll}
\hline
\textbf{Case1} & Myers-briggs personality test what do the letters stand for? \\ \hline
\multirow{2}{*}{\textbf{\begin{tabular}[c]{@{}l@{}}Gold Original\\ Text\end{tabular}}} & \begin{tabular}[c]{@{}l@{}}\{“text”: “INTP (introversion, intuition, thinking, perceiving)\\ is an abbreviation used in the publications of the   Myers\textbackslash{}u00e2\\ \textbackslash{}u0080\textbackslash{}u0093Briggs Type Indicator (MBTI) to refer to one of the\\ MBTI's 16 personality types. NTP (introversion, intuition, thinking,\\ perceiving) is an abbreviation used in the publications of the\\ Myers\textbackslash{}u00e2\textbackslash{}u0080\textbackslash{}u0093Briggs Type Indicator (MBTI) to refer to one\\ of the MBTI's 16 personality types.”\}\end{tabular} \\ \cline{2-2} 
 & \textbf{Score: 37.73    \ \ \ \ \ \ \ \ \        Rank: 324} \\ \hline
\multirow{2}{*}{\textbf{QA Pair}} & \begin{tabular}[c]{@{}l@{}}\{“text”: “What does INTP stand for in the context of the\\ Myers-Briggs Type Indicator (MBTI)? INTP stands for introversion, \\ intuition,   thinking, and perceiving, and it refers to one of the 16 \\ personality types in   the MBTI.”\}\end{tabular} \\
 & \textbf{Score: 72.59    \ \ \ \ \ \ \ \ \        Rank: 1} \\ \hline
\multirow{2}{*}{\textbf{Event}} & \begin{tabular}[c]{@{}l@{}}\{“text”: “Definition INTP abbreviation used in the publications\\ of the Myers Briggs Type Indicator (MBTI) to refer to one of the \\ MBTI's 16   personality types.”\}\end{tabular} \\ \cline{2-2} 
 & \textbf{Score: 61.08    \ \ \ \ \ \ \ \ \        Rank: 4} \\ \hline
 \hline
\textbf{Case2} & How much will i need to close on a house with fha? \\ \hline
\multirow{2}{*}{\textbf{\begin{tabular}[c]{@{}l@{}}Gold Original\\ Text\end{tabular}}} & \begin{tabular}[c]{@{}l@{}}\{“text”: “And then there are closing costs. Those two costs will\\ equal the total cash needed to close. Closing costs are roughly 2-2.25\% \\ of the purchase price. So if you’re looking at a home for \$500,000, plan on\\ closing costs to be around \$10,000.How much down payment … ” \}\\ (160 words)\end{tabular} \\ \cline{2-2} 
 & \textbf{Score: 36.32    \ \ \ \ \ \ \ \ \        Rank: 135} \\ \hline
\multirow{2}{*}{\textbf{QA Pair}} & \begin{tabular}[c]{@{}l@{}}\{“text”: “What are closing costs and how are they calculated?   Closing \\ costs are one of the two costs that equal the total cash needed to\\ close a property purchase. They are roughly 2-2.25\% of the purchase price.”\}\end{tabular} \\ \cline{2-2} 
 & \textbf{Score: 73.33    \ \ \ \ \ \ \ \ \        Rank: 2} \\ \hline
\multirow{2}{*}{\textbf{Event}} & \begin{tabular}[c]{@{}l@{}}\{“text”: “Closing Costs. Closing costs. purchase price. Closing\\ costs are roughly 2-2.25\% of the purchase price. For a home priced at\\ \$500,000, closing costs would be around \$10,000.”\}\end{tabular} \\ \cline{2-2} 
 & \textbf{Score: 76.04    \ \ \ \ \ \ \ \ \        Rank: 1} \\ \hline
\end{tabular}%
}
\end{table}

\section{Analysis}
In this section, we conduct the analysis of QAEA-DR and evaluate its performance under different conditions and configurations.

\begin{figure} [h]
  \centering
  \includegraphics[width=1.0\linewidth]{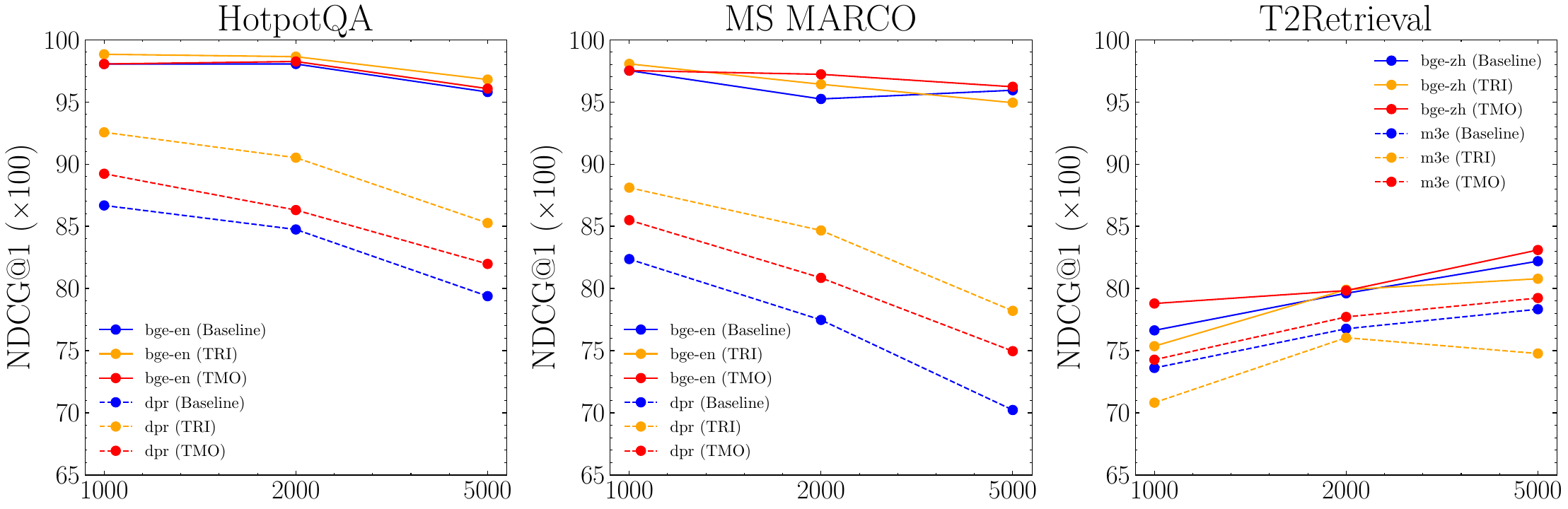}
  \caption{Analysis of different dataset sizes: NDCG@1 (\(\times 100\))}
  \label{perf_dataset_size}
\end{figure}

\begin{figure} [h]
  \centering
  \includegraphics[width=0.8\linewidth]{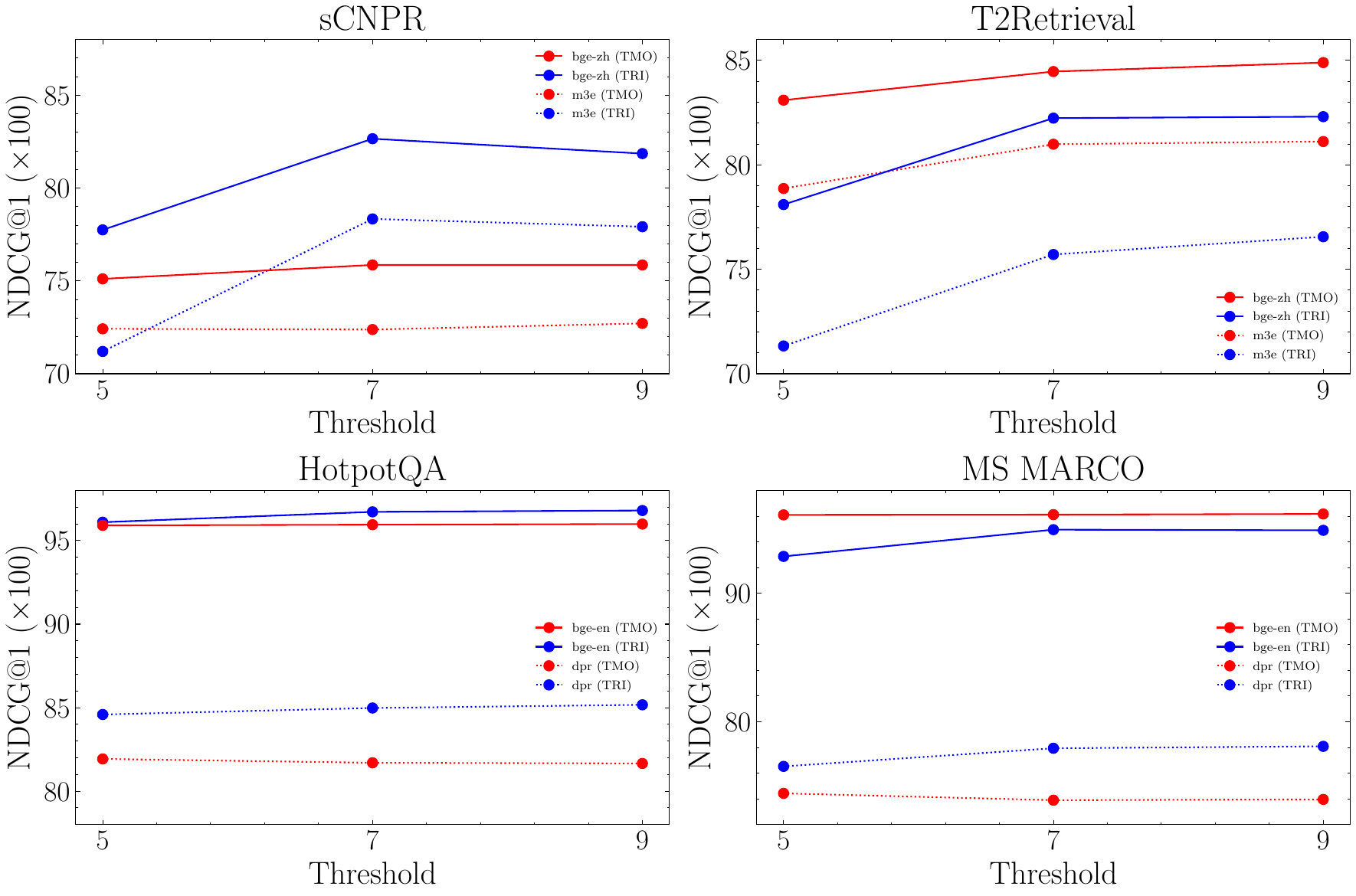}
  \caption{Analysis of different regeneration thresholds in QAEA-DR: NDCG@1 (\(\times 100\))}
  \label{analyse_regen_threashold}
\end{figure}

\subsection{Impact of Dataset Size}
We investigate the impact of dataset size across three open datasets as shown in Fig. \ref{perf_dataset_size}. Across dataset sizes of 1000, 2000, and 5000, QAEA-DR consistently demonstrates superior performance over the baseline. Besides, as dataset size varies, QAEA-DR and the baseline show similar trends across the three datasets, which indicates the consistency of text augmentation. The experimental results also corroborate our earlier discussions on the two text organization strategies. QAEA (TMO) shows good stability throughout the experiment, consistently outperforming the baseline. In contrast, although QAEA (TRI) shows significant performance improvements in most cases, its performance fluctuates greatly due to its high sensitivity to the quality of text generation, leading to unstable results across different dataset sizes.

\subsection{Impact of Regeneration Threshold}
We study the impact of the score threshold in text regeneration process. Higher threshold \(\tau\) means texts are more likely to require regeneration to meet these standards as shown in Algorithm \ref{alg1}.

Fig. \ref{analyse_regen_threashold} shows the retrieval performance at three different regeneration score thresholds: 5, 7, and 9, out of a total score of 10. Figure presents the best results from GLM and GPT. For the sCNPR and T2Retrieval datasets, NDCG@1 increases with higher thresholds, indicating that stricter filtering improves retrieval by regenerating lower-quality outputs. For HotpotQA and MS MARCO, performance remains relatively stable across threshold adjustments, which suggests that threshold changes have lesser impact due to the already high quality of the initial text generation. Meanwhile, QAEA (TMO) demonstrates more stable performance than QAEA (TMO) across datasets, particularly at higher thresholds.

\begin{table}
\centering
\caption{Average Number of QA Pairs and Events Generated From Original Text Through LLMs}
\label{tab:avg_num_qa_event}
\begin{tabular}{l|c|c|c}
\hline
Dataset            & Model      & Avg. Number of QA   & Avg. Number of Event \\
                   &            & per Text        & per Text \\
\hline
\multirow{2}{*}{sCNPR}       & GPT & 12.38 & 4.37 \\
                             & GLM & 11.06 & 4.18 \\
\hline
\multirow{2}{*}{T2Retrieval} & GPT & 6.13 & 3.71 \\
                             & GLM & 5.91 & 3.61 \\
\hline
\multirow{2}{*}{HotpotQA}    & GPT & 5.60 & 2.51 \\
                             & GLM & 5.07 & 2.35 \\
\hline
\multirow{2}{*}{MS MARCO}    & GPT & 4.31 & 2.27 \\
                             & GLM & 3.94 & 2.21 \\
\hline
\end{tabular}
\end{table}

\subsection{Impact of Generated Text Quantity}
We investigate the impact of the number of generated texts per original text on QAEA-DR performance. Table \ref{tab:avg_num_qa_event} summarizes the average number of QA pairs and events generated from original texts using GPT or GLM across four datasets. In all datasets, GPT generates more structured texts than GLM. Referring to Table \ref{tab:main_result}, where GPT generally outperforms GLM in QAEA (TRI) retrieval performance, we infer a positive correlation between the number of generated texts and retrieval performance. Table \ref{tab:avg_num_qa_event} also indicates that the average number of QA pairs generated is 2.15 times the average number of events. This finding is intuitive since each QA pair only conveys an individual information point, while event summarizes richer information.

\begin{table}[h]
\centering
\caption{Comparison of QAEA-DR generated text diversity scores}
\label{tab:text_diversity_comparison}
\resizebox{\columnwidth}{!}{%
\setlength{\tabcolsep}{3pt}
\small
\begin{tabular}{l|cccc|cccc}
\hline
\multirow{3}{*}{Metrics} & \multicolumn{4}{c|}{MS MARCO}                 & \multicolumn{4}{c}{HotpotQA}                \\
\cline{2-9}
                         & \multicolumn{2}{c|}{QA} & \multicolumn{2}{c|}{Event}  & \multicolumn{2}{c|}{QA} & \multicolumn{2}{c}{Event}  \\
\cline{2-9}
                         & GPT   & GLM   & GPT   & GLM   & GPT   & GLM   & GPT   & GLM   \\
\hline
Compression Ratio        & 1.0896 & 1.1016 & 1.2242 & 1.6411 & 0.9991 & 1.0112 & 1.1057 & 1.5182 \\
Self-BLEU                & 0.0934 & 0.0158 & 0.1457 & 0.4000 & 0.0414 & 0.0088 & 0.0867 & 0.3776 \\
Self-BERTScore               & 0.6848 & 0.6806 & 0.6662 & 0.8606 & 0.6355 & 0.6364 & 0.6006 & 0.8482 \\
Self-Repetition          & 0.1524 & 0.2024 & 0.0875 & 0.4786 & 0.0986 & 0.1623 & 0.0462 & 0.4595 \\
\hline
NDCG@1                   & 71.67  & 71.78  & 65.29  & 64.80  & 79.07  & 79.27  & 73.01  & 72.24  \\
\hline
\end{tabular}%
}
\\[2pt]
\raggedright
Lower diversity scores suggest greater text diversity.
\end{table}

\subsection{Impact of Generated Text Diversity}
\label{subsec:text diversity}
We perform a generated text diversity analysis to measure how the diversity of QA pairs and events affects retrieval performance, as shown in Table \ref{tab:text_diversity_comparison}. The analysis includes several metrics: Compression Ratio, Self-BLEU, Self-BERTScore, and Self-Repetition, where lower scores suggest greater text diversity \cite{shaib2024standardizing}.

The results show a clear correlation between higher diversity (lower scores) and improved retrieval performance. This effect is particularly evident in generated events across two datasets. Specifically, QA pairs exhibit greater diversity and better retrieval performance than events. For example, QA pairs have an average Self-BLEU score of 0.0399 with an NDCG@1 score of 75.35, while events have a higher Self-BLEU score of 0.2525 and an NDCG@1 score of 68.84. Additionally, although GPT generally shows more diversity than GLM across the two English datasets, their NDCG@1 scores remain comparable. This suggests that minor variations in text diversity have limited influence on retrieval performance, highlighting the robustness of QAEA-DR. We also observe that Self-BLEU strongly correlates with retrieval performance. Self-BLEU serves as a valuable metric for evaluating generated text quality, offering a lexical perspective alongside embeddings.

\begin{table}[h]
\centering
\caption{Comparison of LMQG and Our LLM-based GPT-QAG on Retrieval Performance: NDCG@1 (\(\times 100\))}
\label{tab:compare_with_QAEA_LMQG}
\begin{tabular}{l|ll|ll}
\hline
\multirow{2}{*}{} & \multicolumn{2}{c|}{MS MARCO} & \multicolumn{2}{c}{HotpotQA} \\ \cline{2-5} 
 & \multicolumn{1}{l|}{dpr} & bge-en & \multicolumn{1}{l|}{dpr} & bge-en \\ \hline
LMQG (TRI) & \multicolumn{1}{l|}{69.88} & 89.64 & \multicolumn{1}{l|}{78.66} & 93.55 \\
LMQG (TMO) & \multicolumn{1}{l|}{64.55} & 90.15 & \multicolumn{1}{l|}{77.49} & 93.90 \\
GPT-QAG (TRI) & \multicolumn{1}{l|}{\textbf{74.07}} & 92.89 & \multicolumn{1}{l|}{\textbf{80.23}} & 95.29 \\
GPT-QAG (TMO) & \multicolumn{1}{l|}{71.67} & \textbf{93.98} & \multicolumn{1}{l|}{79.07} & \textbf{95.33} \\ \hline
\end{tabular}
\\[2pt]
The values in bold represent the largest NDCG.
\end{table}

\subsection{QAG Comparisons}
Table \ref{tab:compare_with_QAEA_LMQG} presents the QAG retrieval performance of our LLM-based text augmentation method compared to the popular QAG model LMQG. We use the QA component of the QAEA framework as described in Section \ref{subsec:ablation} and employ GPT-3.5-turbo as the text generation LLM, denoted as GPT-QAG. Both LMQG and GPT-QAG generate vector databases of QA pair texts \(\text{VDB}_{\text{QA}}\) for dense retrieval. Since LMQG only supports English, we display comparison results on two English datasets. We observe that under both TRI and TMO QA text organization strategies, our GPT-QAG significantly outperforms LMQG, demonstrating the effectiveness of LLM-based QAG using a three-step prompting process. Notably, evaluating the text retrieval capability of generated QA pairs offers a new insight for assessing QAG performance. 

\begin{table}[h]
\centering
\caption{The NDCG (\(\times 100\)) and generated text noise (\%) performance on MS MARCO dataset under different input noise levels (\%)}
\label{tab:noise_sensitivity_analysis}
\begin{tabular}{l|c|c|c}
\hline
Method & Input Noise (\%) & NDCG@1 & Generated Text Noise (\%) \\ 
\hline
\hline
\multirow{4}{*}{QAEA-DR} 
 & 0   & 96.21 & --    \\ 
 & 20  & 96.12 & 0.33  \\ 
 & 50  & 96.15 & 0.80  \\ 
 & 100 & 96.15 & 1.94  \\ 
\hline
\multirow{4}{*}{QA} 
 & 0   & 93.98 & --    \\ 
 & 20  & 94.34 & 0.05  \\ 
 & 50  & 94.00 & 0.13  \\ 
 & 100 & 92.18 & 0.20  \\ 
\hline
\multirow{4}{*}{Event} 
 & 0   & 91.56 & --    \\ 
 & 20  & 92.12 & 0.79  \\ 
 & 50  & 91.78 & 1.86  \\ 
 & 100 & 91.86 & 4.38  \\ 
\hline
\end{tabular}
\end{table}

\subsection{Text Noise Sensitive Analysis}
We conduct a sensitivity analysis to evaluate the robustness of QAEA-DR against noise. We note that the noise in the texts refers to content that is unrelated to the query. In our experiments, we introduce semantically meaningless texts into the original texts to simulate random noise, defined as sequences of randomly generated letters, digits, and spaces (e.g., DtAfgJndLj FO7djxaOM). The original texts are augmented with 20\%, 50\%, and 100\% input noise based on their text length and processed through QAEA-DR to generate QA and event texts. We compare the NDCG@1 scores for retrieval with and without noise and calculate the proportion of input noise retained in the generated texts.

Table \ref{tab:noise_sensitivity_analysis} summarizes the results using GPT-3.5-turbo generator and the bge-en embedding model under the QAEA (TMO) strategy. The table shows that retrieval performance is not greatly affected by adding significant input noise. QA pairs and events retain some noise from the input, but the proportion of retained noise is much lower. For example, with 100\% input noise, the generated QA pairs retain 0.20\% of the noise, while the generated events retain 4.38\%. The higher noise retention in events is likely due to their structured complexity and longer descriptions. QA pairs focus on individual information points, which makes them less likely to retain noise.

\begin{table}[h]
\centering
\caption{Time Complexity Comparisons for QAEA (TMO) and QAEA (TRI) on sCNPR}
\label{tab:time_complexity_combined}
\begin{tabular}{l|c|c|c|c}
\hline
& \multicolumn{2}{c|}{TMO} & \multicolumn{2}{c}{TRI} \\ \hline
Baseline (Original) & \multicolumn{4}{c}{$nT_\text{sim}$} \\ \hline
& GPT & GLM & GPT & GLM \\
QA & $nT_\text{sim}$ & $nT_\text{sim}$ & $12.4nT_\text{sim}$ & $11.1nT_\text{sim}$ \\
Event & $nT_\text{sim}$ & $nT_\text{sim}$ & $4.4nT_\text{sim}$ & $4.2nT_\text{sim}$ \\
QA+Event & $2nT_\text{sim}$ & $2nT_\text{sim}$ & $16.8nT_\text{sim}$ & $15.3nT_\text{sim}$ \\
QA+Event+Original & $3nT_\text{sim}$ & $3nT_\text{sim}$ & $17.8nT_\text{sim}$ & $16.3nT_\text{sim}$ \\ \hline
\end{tabular}
\end{table}

\begin{table}[ht]
\centering
\caption{Computational cost comparison of different LLMs on MS MARCO dataset}
\label{tab:llm_comparison}
\begin{tabular}{lcccccc}
\hline
LLMs & Task & Size & NDCG@1 & Time & Memory \\
\hline
\hline
\multirow{2}{*}{GPT-3.5-turbo} & QAG    & \multirow{2}{*}{Unknown} & 93.98 & 3.29s  & \multirow{2}{*}{Unknown} \\
                               & EE     &                          & 91.56 & 2.69s  &                         \\
\cline{1-6}
\multirow{2}{*}{ChatGLM3}      & QAG    & \multirow{2}{*}{6B}      & 93.28 & 11.66s & \multirow{2}{*}{11.9GB} \\
                                & EE     &                          & 90.86 & 6.91s &                         \\
\cline{1-6}
\multirow{2}{*}{Mistral}       & QAG    & \multirow{2}{*}{7B}      & 93.23  & 15.26s & \multirow{2}{*}{13.8GB} \\
                                & EE     &                          & 92.01  & 6.58s &                         \\
\cline{1-6}
\multirow{2}{*}{Llama-3}       & QAG    & \multirow{2}{*}{8B}      & 92.89  & 10.39s & \multirow{2}{*}{15.3GB} \\
                                & EE     &                          & 87.67  & 6.05s &                         \\
\hline
\end{tabular}
\end{table}

\subsection{Retrieval Time Complexity}
The computational time of dense retrieval is independent of text length since all texts are embedded into vectors of the same dimensionality. Hence, the retrieval time is directly proportional to the number of text vectors in the vector database. For \(n\) texts and the associated vector database \(\text{VDB}_\text{ori} = \{v_1, v_2, \ldots, v_n\}\) under baseline, the time taken for each query to traverse the texts is \(n \times T_\text{sim}\), where \(T_\text{sim}\) is the computational cost to calculate \(\text{sim}(v_q, v_i)\), \(v_i \in \text{VDB}_\text{ori}\). Table \ref{tab:time_complexity_combined} presents the time complexity for QAEA (TMO) and QAEA (TRI) on the sCNPR dataset. Similar conclusions regarding time complexity are observed across other datasets.
\begin{itemize}
\item{For QAEA (TMO), each original text generates a merged QA pair text and a merged event text. In the TMO case, the presence of any component—Original, QA, or Event—increases the retrieval time by \(nT_\text{sim}\). It is observed that having only a QA or Event component without the original text results in computational times comparable to the baseline.}
\item{For QAEA (TRI), each original text generates and remains multiple QA pairs and events. In the TRI scenario, adding a QA or Event component may increase computational time several-fold compared to the baseline.}
\end{itemize}
To summarize, we consider the time overhead of QAEA-DR to be relatively flexible, and the individual QA or Event component in the TMO case matches the baseline time consumption, demonstrating a direction for optimizing text augmentation that balances high performance with time efficiency.

\subsection{LLM Computational Costs}
Table \ref{tab:llm_comparison} compares the computational costs of generating QAEA-DR texts for GPT-3.5-turbo, ChatGLM3-6B, Mistral-7B, and Llama-3-8B. The NDCG@1 results in Table \ref{tab:llm_comparison} are based on the bge-en embedding model under the QAEA (TMO) strategy. All experiments run on the NVIDIA RTX 3090 GPU (24 GB) without optimizations for inference efficiency or memory usage.

We observe that QAG tasks require more inference time than EE tasks, which suggests that QAG outputs are generally longer. This difference in output characteristics may contribute to QA pairs achieving higher NDCG@1 scores compared to events. GPT-3.5-turbo outperforms open-source small models in both accuracy and inference time, demonstrating its superior efficiency. For open-source LLMs, memory usage scales with model size, yet smaller-scale models still achieve reasonable inference time and solid retrieval performance. These findings show that QAEA-DR can run on moderate hardware in resource-limited scenarios, while larger models can improve generation quality. This flexibility allows users to balance computational cost and retrieval performance.

\section{Conclusion}
In this paper, we introduce QAEA-DR, a novel unified text augmentation framework for dense retrieval. This approach optimizes the original text by generating multiple QA pairs and events via LLM-based information extraction, which concentrates on key information and removes noisy text. As a result, the augmented vector database increases retrieval fidelity and effectively mitigates the issue of losing key information in dense retrieval. We conduct comprehensive experiments to demonstrate the effectiveness and robustness of QAEA-DR, even for datasets mainly comprising short texts. QAEA-DR indicates broader applicability by offering insights into open-domain LLM-based QAG and EE, and serving as a universal text optimizer in Retrieval-Augmented Generation (RAG).

\bibliographystyle{IEEEtran}
\bibliography{IEEEabrv, reference}

\vspace{-0.5cm}

\begin{IEEEbiography}[{\includegraphics[width=0.8in,height=1in,clip,keepaspectratio]{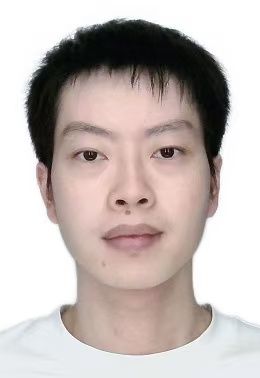}}]{Hongming Tan}
received the Master of Arts degree in Statistics from Washington University in St. Louis, U.S. in 2020. He is currently working towards the PhD degree with Tsinghua University. His research interests include natural language processing, deep learning, and information retrieval.
\end{IEEEbiography}

\vspace{-5cm}

\begin{IEEEbiography}[{\includegraphics[width=0.8in,height=1in,clip,keepaspectratio]{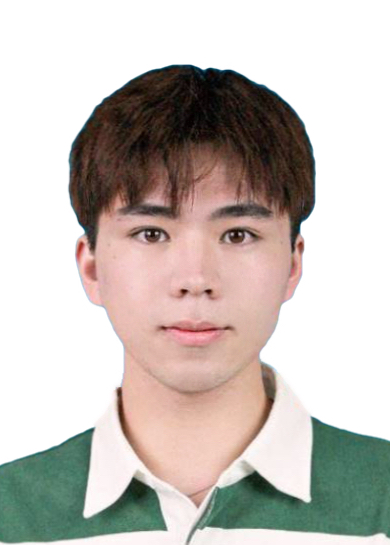}}]{Shaoxiong Zhan}
received the Bachelor of Engineering degree from Huazhong Agricultural University, China. He is currently working towards the master’s degree with Tsinghua University. His research interests include information retrieval, large language models, and natural language processing.
\end{IEEEbiography}

\vspace{-5cm}

\begin{IEEEbiography}[{\includegraphics[width=0.8in,height=1in,clip,keepaspectratio]{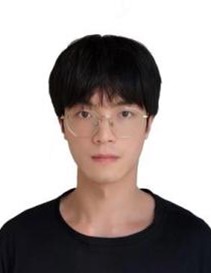}}]{Hai Lin}
received the Master's degree at the School of Computer, Electronics, and Information, Guangxi University in Nanning, China. Currently, he is pursuing a Ph.D. at Tsinghua University, Shenzhen International Graduate School in Shenzhen, Guangdong, China. His research interests include natural language processing, large language models, dialogue systems, and multimodal models.
\end{IEEEbiography}

\vspace{-5cm}

\begin{IEEEbiography}[{\includegraphics[width=0.8in,height=1in,clip,keepaspectratio]{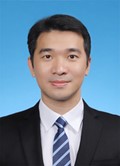}}]{Hai-Tao Zheng}
(Senior Member, IEEE) received the Ph.D. degree in medical informatics from Seoul National University, South Korea, in 2009. He is currently an Associate Professor with the Tsinghua Shenzhen International Graduate School, Tsinghua University, China. His research interests include knowledge engineering, natural language processing, and large language models.
\end{IEEEbiography}

\vspace{-5cm}

\begin{IEEEbiography}[{\includegraphics[width=1in,height=1.2in,clip,keepaspectratio]{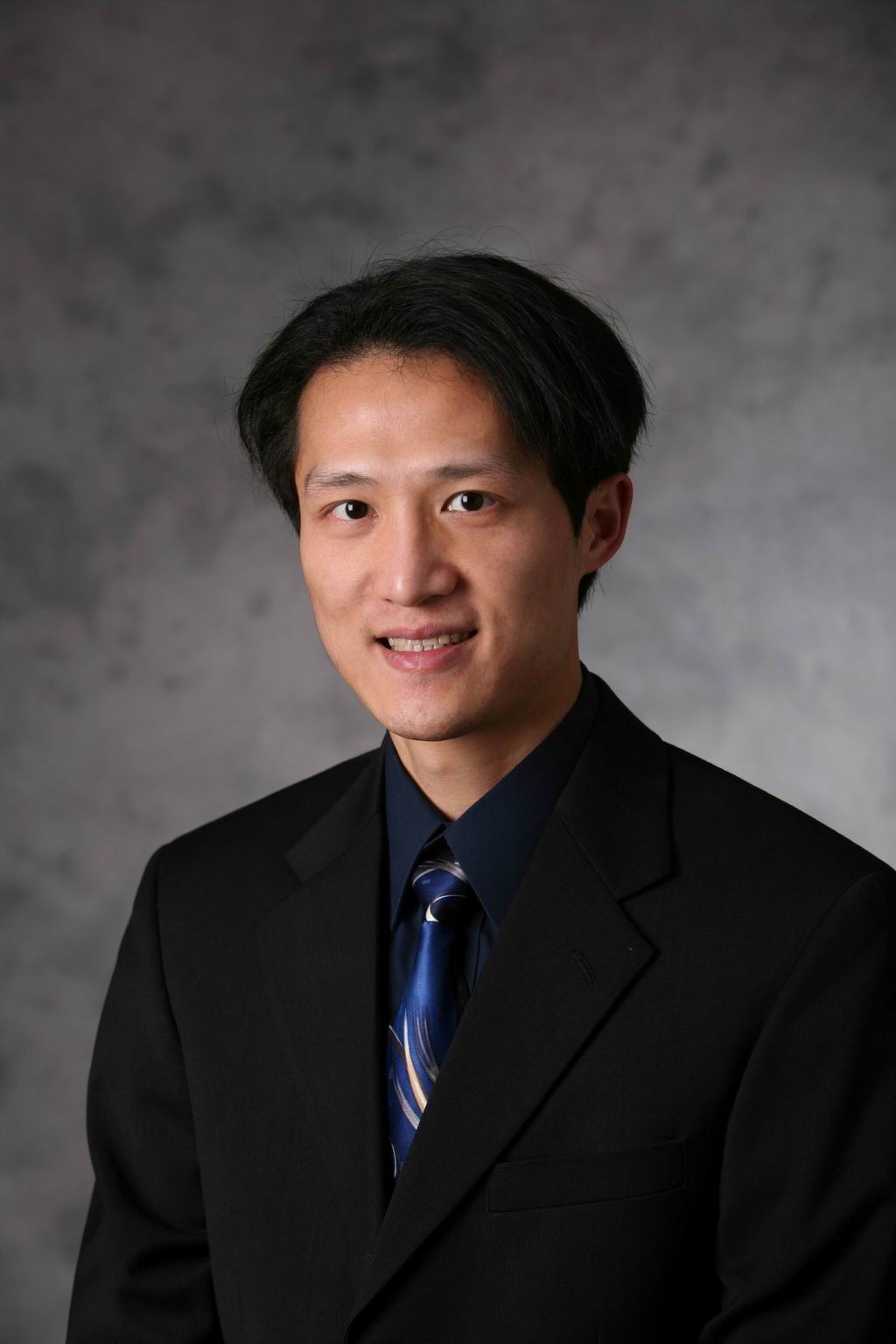}}]{Wai Kin (Victor) Chan}
(Senior Member, IEEE) received the Ph.D. degree in industrial engineering and operations research from the University of California, Berkeley. He is Professor of the Shenzhen International Graduation School (SIGS), Tsinghua University, China. His research interests include agent-based and discrete-event simulation, data science and operations research, and their applications in emergency management, Edverse, social networks, service systems, healthcare, transportation, energy markets, and manufacturing. 
\end{IEEEbiography}


\end{document}